\newcommand{\qvalue}{Q}
\newcommand{\vvalue}{V}
\newcommand{\reward}{r}
\newcommand{\la}{\langle}
\newcommand{\ra}{\rangle}
\title{Provably Efficient Reinforcement Learning with Linear Function Approximation under Adaptivity Constraints}
\author{%
  Tianhao Wang\thanks{Equal contribution} \\
  Department of Statistics and Data Science\\
  Yale University\\
  New Haven, CT 06511\\
  \texttt{tianhao.wang@yale.edu}\\
  \And 
  Dongruo Zhou$^*$\\
  Department of Computer Science\\
  University of California, Los Angeles\\
  Los Angeles, CA 90095\\
  \texttt{drzhou@cs.ucla.edu}\\
  \And
  Quanquan Gu\\
  Department of Computer Science\\
  University of California, Los Angeles\\
  Los Angeles, CA 90095\\
  \texttt{qgu@cs.ucla.edu}\\
}
\begin{document}

\maketitle

\begin{abstract}
We study reinforcement learning (RL) with linear function approximation under the adaptivity constraint. We consider two popular limited adaptivity models: the batch learning model and the rare policy switch model, and propose two efficient online RL algorithms for episodic linear Markov decision processes, where the transition probability and the reward function can be represented as a linear function of some known feature mapping. In specific, for the batch learning model, our proposed LSVI-UCB-Batch algorithm achieves an $\tilde O(\sqrt{d^3H^3T} + dHT/B)$ regret, where $d$ is the dimension of the feature mapping, $H$ is the episode length, $T$ is the number of interactions and $B$ is the number of batches. Our result suggests that it suffices to use only $\sqrt{T/dH}$ batches to obtain $\tilde O(\sqrt{d^3H^3T})$ regret. For the rare policy switch model, our proposed LSVI-UCB-RareSwitch algorithm enjoys an $\tilde O(\sqrt{d^3H^3T[1+T/(dH)]^{dH/B}})$ regret, which implies that $dH\log T$ policy switches suffice to obtain the $\tilde O(\sqrt{d^3H^3T})$ regret. Our algorithms achieve the same regret as the LSVI-UCB algorithm \citep{jin2020provably}, yet with a substantially smaller amount of adaptivity. We also 
establish a lower bound for the batch learning model, which suggests that the dependency on $B$ in our regret bound is tight.
\end{abstract}

    
    
    
    

\section{Introduction}
Real-world reinforcement learning (RL) applications often come with possibly infinite state and action space, and in such a situation classical RL algorithms developed in the tabular setting are not applicable anymore. A popular approach to overcoming this issue is by applying function approximation techniques to the underlying structures of the Markov decision processes (MDPs). For example, one can assume that the transition probability and the reward are linear functions of a known feature mapping $\bphi:\cS\times\cA\to\RR^d$, where $\cS$ and $\cA$ are the state space and action space, and $d$ is the dimension of the embedding. This gives rise to the so-called linear MDP model \citep{yang2019sample,jin2020provably}.
Assuming access to a generative model, efficient algorithms under this setting have been proposed by \citet{yang2019sample} and \citet{lattimore2020learning}. For online finite-horizon episodic linear MDPs, \citet{jin2020provably} proposed an LSVI-UCB algorithm that achieves $\Tilde{O}(\sqrt{d^3H^3T})$ regret, where $H$ is the planning horizon (i.e., length of each episode) and $T$ is the number of interactions.

However, all the aforementioned algorithms require the agent to update the policy in every episode. In practice, it is often unrealistic to frequently switch the policy in the face of big data, limited computing resources as well as inevitable switching costs. Thus one may want to batch the data stream and update the policy at the end of each period. For example, in clinical trials, each phase~(batch) of the trial amounts to applying a medical treatment to a batch of patients in parallel. The outcomes of the treatment are not observed until the end of the phase and will be subsequently used to design experiments for the next phase. 
Choosing the appropriate number and sizes of the batches is crucial to achieving nearly optimal efficiency for the clinical trial. This gives rise to the limited adaptivity setting, which has been extensively studied in many online learning problems including prediction-from-experts (PFE) \citep{kalai2005efficient,cesa2013online}, multi-armed bandits (MAB) \citep{arora2012online,cesa2013online} and online convex optimization \citep{jaghargh2019consistent, chen2020minimax}, to mention a few. Nevertheless, in the RL setting, learning with limited adaptivity is relatively less studied. \citet{bai2019provably} introduced two notions of adaptivity in RL, \emph{local switching cost} and \emph{global switching cost}, that are defined as follows
\begin{align}\label{eq: switching cost}
    N_{\text{local}} = \sum_{k=1}^{K-1}\sum_{h=1}^H\sum_{s\in\cS} \ind\{\pi_h^k(s)\neq \pi_h^{k+1}(s)\}\quad\text{and}\quad N_{\text{global}} = \sum_{k=1}^{K-1} \ind\{\pi^k \neq \pi^{k+1}\},
\end{align}
where $\pi^k = \{\pi_h^k:\cS\to\cA\}_{h\in[H]}$ is the policy for the $k$-th episode of the MDP,  $\pi^k\neq\pi^{k+1}$ means that there exists some $(h,s)\in[H]\times\cS$ such that $\pi_h^k(s)\neq\pi_h^{k+1}(s)$, and $K$ is the number of episodes. Then they proposed a Q-learning method with UCB2H exploration that achieves $\Tilde{O}(\sqrt{H^3SAT})$ regret with $O(H^3SA\log(T/(AH))$ local switching cost for tabular MDPs, but they did not provide tight bounds on the global switching cost. 

In this paper, based on the above motivation, we aim to develop online RL algorithms with linear function approximation under adaptivity constraints. In detail, we consider time-inhomogeneous\footnote{We say an episodic MDP is time-inhomogeneous if its reward and transition probability are different at different stages within each episode. See Definition \ref{assumption-linear} for details.} episodic linear MDPs \citep{jin2020provably} where both the transition probability and the reward function are unknown to the agent. 
In terms of the limited adaptivity imposed on the agent, we consider two scenarios that have been previously studied in the online learning literature \citep{perchet2016batched, abbasi2011improved}: the batch learning model and the rare policy switch model. More specifically, in the batch learning model \citep{perchet2016batched}, the agent is forced to pre-determine the number of batches (or equivalently batch size). 
Within each batch, the same policy is used to select actions, and the policy is updated only at the end of this batch. The amount of adaptivity in the batch learning model is measured by the number of batches, which is expected to be as small as possible. In contrast, in the rare policy switch model \citep{abbasi2011improved}, the agent can adaptively choose when to switch the policy and therefore start a new batch in the learning process as long as the total number of policy updates does not exceed the given budget on the number of policy switches. The amount of adaptivity in the rare policy switch model can be measured by the number of policy switches, which turns out to be the same as the global switching cost introduced in \citet{bai2019provably}.
It is worth noting that for the same amount of adaptivity\footnote{The number of batches in the batch learning model is comparable to the number of policy switches in the rare policy switch model.}, the rare policy switch model can be seen as a relaxation of the batch learning model since the agent in the batch learning model can only change the policy at pre-defined time steps.
In our work,  for each of these limited adaptivity models, we propose a variant of the LSVI-UCB algorithm \citep{jin2020provably}, which can be viewed as an RL algorithm with full adaptivity in the sense that it switches the policy at a per-episode scale. Our algorithms can attain the same regret as LSVI-UCB, yet with a substantially smaller number of batches/policy switches. This enables parallel learning and improves the large-scale deployment of RL algorithms with linear function approximation.

The main contributions of this paper are summarized as follows:
\begin{itemize}[leftmargin = *]
    \item For the \emph{batch learning} model, we propose an LSVI-UCB-Batch algorithm for linear MDPs and show that it enjoys an $\tilde O(\sqrt{d^3H^3T} + dHT/B)$ regret, where $d$ is the dimension of the feature mapping, $H$ is the episode length, $T$ is the number of interactions and $B$ is the number of batches. Our result suggests that it suffices to use only $\sqrt{T/dH}$ batches, rather than $T$ batches, to obtain the same regret $\tilde O(\sqrt{d^3H^3T})$ achieved by LSVI-UCB \citep{jin2020provably} in the fully sequential decision model. We also prove a lower bound of the regret for this model, which suggests that the required number of batches $\tilde O(\sqrt{T})$ is sharp.
    \item For the \emph{rare policy switch} model, we propose an LSVI-UCB-RareSwitch algorithm for linear MDPs and show that it enjoys an $\tilde O(\sqrt{d^3H^3T[1+T/(dH)]^{dH/B}})$ regret, where $B$ is the number of policy switches. Our result implies that $dH\log T$ policy switches are sufficient to obtain the same regret $\tilde O(\sqrt{d^3H^3T})$ achieved by LSVI-UCB. The number of policy switches is much smaller than that\footnote{The number of policy switches is identical to the number of batches in the batch learning model.} of the batch learning model when $T$ is large. 
\end{itemize}

Concurrent to our work,  \citet{gao2021provably} proposed an algorithm achieving $\tilde O(\sqrt{d^3H^3T})$ regret with a $O(dH\log K)$ global switching cost in the rare policy switch model. They also proved a $\Omega(dH/\log d)$ lower bound on the global switching cost.
The focus of our paper is different from theirs: our goal is to design efficient RL algorithms under a switching cost budget $B$, while their goal is to achieve the optimal rate in terms of $T$ with as little switching cost as possible. On the other hand, for the rare policy switch model, our proposed algorithm (LSVI-UCB-RareSwitch) along its regret bound can imply their results by optimizing our regret bound concerning the switching cost budget $B$.

The rest of the paper is organized as follows. In Section \ref{sec: related work} we discuss previous works related to this paper, with a focus on RL with linear function approximation and online learning with limited adaptivity. In Section \ref{section 3} we introduce necessary preliminaries for MDPs and adaptivity constraints. Sections \ref{sec:batch} and \ref{sec:rare} present our proposed algorithms and the corresponding theoretical results for the batch learning model and the rare policy switch model respectively. In Section \ref{sec: experiment} we present the numerical experiment which supports our theory. Finally, we conclude our paper and point out a future direction in Section \ref{sec: conclusions}.

\noindent\textbf{Notation} 
We use lower case letters to denote scalars and use lower and upper case boldface letters to denote vectors and matrices respectively. 
For any real number $a$, we write $[a]^+ = \max(a,0)$.
For a vector $\xb\in \RR^d$ and matrix $\bSigma\in \RR^{d\times d}$, we denote by $\|\xb\|_2$ the Euclidean norm and define $\|\xb\|_{\bSigma}=\sqrt{\xb^\top\bSigma\xb}$. 
For any positive integer $n$, we denote by $[n]$ the set $\{1,\dots,n\}$. 
For any finite set $A$, we denote by $|A|$ the cardinality of $A$. 
For two sequences $\{a_n\}$ and $\{b_n\}$, we write $a_n=O(b_n)$ if there exists an absolute constant $C$ such that $a_n\leq Cb_n$, and we write $a_n=\Omega(b_n)$ if there exists an absolute constant $C$ such that $a_n\geq Cb_n$. 
We use $\tilde O(\cdot)$ to further hide the logarithmic factors.

\section{Related Works}\label{sec: related work}
\textbf{Reinforcement Learning with Linear Function Approximation}
Recently, there have been many advances in RL with function approximation, especially the linear case. \citet{jin2020provably} proposed an efficient algorithm for the first time for linear MDPs of which the transition probability and the rewards are both linear functions with respect to a feature mapping $\bphi:\cS\times\cA\to\RR^d$. 
Under similar assumptions, different settings (e.g., discounted MDPs) have also been studied in \citet{yang2019sample, du2019good,zanette2020learning,neu2020unifying} and \citet{he2020logarithmic}. A parallel line of work studies linear mixture MDPs (a.k.a. linear kernel MDPs) based on a ternary feature mapping $\psi:\cS\times\cA\times\cS\to\RR^d$ (see \citet{jia2020model, zhou2020provably,cai2020provably,zhou2020nearly}). 
For other function approximation settings, we refer readers to generalized linear model \citep{wang2021optimism}, general function approximation with Eluder dimension \citep{wang2020reinforcement,ayoub2020model}, kernel approximation \citep{yang2020bridging}, function approximation with disagreement coefficients \citep{foster2020instance} and bilinear classes \citep{du2021bilinear}.

\noindent\textbf{Online Learning with Limited Adaptivity} 
As we mentioned before, online learning with limited adaptivity has been studied in two popular models of adaptivity constraints: the batch learning model and the rare policy switch model.

For the \emph{batch learning model}, \citet{altschuler2018online} proved that the optimal regret bound for prediction-from-experts (PFE) is $\tilde O(\sqrt{T\log n})$ when the number of batches $B=\Omega(\sqrt{T\log n})$, and $\min(\tilde O(T\log n/B),T)$ when $B=O(\sqrt{T\log n})$, exhibiting a phase-transition phenomenon\footnote{They call it $B$-switching budget setting, which is identical to the batch learning model.}. Here $T$ is the number of rounds and $n$ is the number of actions. 
For general online convex optimization, \citet{chen2020minimax} showed that the minimax regret bound is $\tilde O(T/\sqrt{B})$. \citet{perchet2016batched} studied batched 2-arm bandits, and \citet{gao2019batched} studied the batched multi-armed bandits (MAB). \citet{dekel2014bandits} proved a $\Omega(T/\sqrt{B})$ lower bound for batched MAB, and \citet{altschuler2018online} further characterized the dependence on the number of actions $n$ and showed that the corresponding minimax regret bound is $\min(\tilde O(T\sqrt{n}/\sqrt{B}),T)$. 
For batched linear bandits with adversarial contexts, \citet{han2020sequential} showed that the minimax regret bound is $\tilde O(\sqrt{dT}+dT/B)$ where $d$ is the dimension of the context vectors. Better rates can be achieved for batched linear bandits with stochastic contexts as shown in \citet{esfandiari2019regret,han2020sequential, ruan2020linear}.

For the \emph{rare policy switch model}, the minimax optimal regret bound for PFE is $O(\sqrt{T\log n})$ in terms of both the expected regret \citep{kalai2005efficient, geulen2010regret,cesa2013online,devroye2015random} and high-probability guarantees \citep{altschuler2018online}, where $T$ is the number of rounds, and $n$ is the number of possible actions. For MAB, the minimax regret bound has been shown to be $\tilde O(T^{2/3}n^{1/3})$ by \citet{arora2012online,dekel2014bandits}. 
For stochastic linear bandits, \citet{abbasi2011improved} proposed a rarely switching OFUL algorithm achieving $\tilde O(d\sqrt{T})$ regret with $\log(T)$ batches. \citet{ruan2020linear} proposed an algorithm achieving $\tilde O(\sqrt{dT})$ regret with less than $O(d\log d\log T)$ batches for stochastic linear bandits with adversarial contexts. 

For episodic RL with finite state and action space, \citet{bai2019provably} proposed an algorithm achieving $\tilde O(\sqrt{H^3SAT})$ regret with $O(H^3SA\log(T/(AH)))$ local switching cost where $S$ and $A$ are the number of states and actions respectively. They also provided a $\Omega(HSA)$ lower bound on the local switching cost that is necessary for sublinear regret. For the global switching cost, \citet{zhang2021reinforcement} proposed an MVP algorithm with at most $O(SA\log(KH))$ global switching cost for time-homogeneous tabular MDPs.

\section{Preliminaries}\label{section 3}

\subsection{Markov Decision Processes}
We consider the time-inhomogeneous episodic Markov decision process, which is denoted by a tuple $M(\cS, \cA, H, \{\reward_h\}_{h\in[H]}, \{\PP_h\}_{h\in[H]})$. Here $\cS$ is the state space (may be infinite), $\cA$ is the action space where we allow the feasible action set to change from step to step, $H$ is the length of each episode, and $\reward_h: \cS \times \cA \rightarrow [0,1]$ is the reward function for each stage $h\in[H]$. At each stage $h\in[H]$, $\PP_h(s'|s,a) $ is the transition probability function which represents the probability for state $s$ to transit to state $s'$ given action $a$. A policy $\pi$ consists of $H$ mappings, $\{\pi_h:\cS\to\cA\}_{h\in[H]}$. For any policy $\pi$, we define the action-value function $\qvalue^\pi_h(s,a)$ and value function $\vvalue^\pi_h(s)$ as follows:
\begin{align}
&\qvalue^\pi_h(s,a) = \reward_h(s,a) + \EE_\pi\bigg[\sum_{i = h}^{H} \reward_i(s_{i}, a_{i})\bigg|s_h = s, a_h = a\bigg], \qquad\vvalue^\pi_h(s) = \qvalue^\pi_h(s,\pi_h(s)),\notag
\end{align}
where $a_i\sim\pi_i(\cdot|s_i)$ and $s_{i+1}\sim\PP_i(\cdot|s_i,a_i)$.
The optimal value function $V^*_h$ and the optimal action-value function $\qvalue^*_h(s,a)$ are defined as $V^*(s) = \sup_{\pi}\vvalue_h^{\pi}(s)$ and $\qvalue^*_h(s,a) = \sup_{\pi}\qvalue_h^{\pi}(s,a)$, respectively.
For simplicity, for any function $\vvalue: \cS \rightarrow \RR$, we denote $[\PP \vvalue](s,a)=\EE_{s' \sim \PP(\cdot|s,a)}\vvalue(s')$. 
In the online learning setting, at the beginning of $k$-th episode, the agent chooses a policy $\pi^k$ and the environment selects an initial state $s_1^k$, then the agent interacts with environment following policy $\pi^k$ and receives states $s_h^k$ and rewards $r_h(s_h^k, a_h^k)$ for $h\in[H]$. To measure the performance of the algorithm, we adopt the following notion of the total regret, which is the summation of suboptimalities between policy $\pi^k$ and optimal policy $\pi^*$:
\begin{definition}\label{def: total regret}
We denote $T=KH$, and the regret $\text{Regret}(T)$ is defined as
\begin{align}
    \text{Regret}(T) 
    &=\sum_{k=1}^K\sbr{V_1^*(s_1^k)-V_1^{\pi^k}(s_1^k)}.\notag
\end{align}
\end{definition}

\subsection{Linear Function Approximation}
In this work, we consider a special class of MDPs called \emph{linear MDPs} \citep{yang2019sample, jin2020provably}, where both the transition probability function and reward function can be represented as a linear function of a given feature mapping $\bphi: \cS \times \cA \rightarrow \RR^d$. Formally speaking, we have the following definition for linear MDPs.
\begin{definition}\label{assumption-linear}
$M(\cS, \cA, H, \{\reward_h\}_{h\in[H]}, \{\PP_h\}_{h\in[H]})$ is called a linear MDP if there exist a \emph{known} feature mapping $\bphi(s,a): \cS \times \cA\rightarrow \RR^d$, \emph{unknown} measures $\{\bmu_h=(\mu_h^{(1)},\cdots,\mu_h^{(d)})\}_{ h\in[H]}$ over $\cS$ and unknown vectors $\{\btheta_h\in\RR^d\}_{h\in[H]}$ with $\max_{h\in[H]}\{\|\bmu_h(\cS)\|_2,\|\btheta_h\|\} \leq \sqrt{d}$, such that the following holds for all $h\in[H]$:
\begin{itemize}
    \item For any state-action-state triplet $(s,a,s') \in \cS \times \cA \times \cS$, $\PP_h(s'|s,a) = \la \bphi(s,a), \bmu_h(s')\ra$.
    
    \item For any state-action pair $(s,a)\in\cS\times\cA$, $r_h(s,a)=\langle\bphi(s,a),\btheta_h\rangle$.
\end{itemize}
Without loss of generality, we also assume that $\|\bphi(s,a)\|_2\leq 1$ for all $(s,a)\in\cS\times\cA.$
\end{definition}

With Definition \ref{assumption-linear}, it is shown in \citet{jin2020provably} that the action-value function can be written as a linear function of the features.
\begin{proposition}[Proposition 2.3, \citealt{jin2020provably}]\label{lemma: linear Q}
For a linear MDP, for any policy $\pi$, there exist weight vectors $\{\wb_h^\pi\}_{h\in[H]}$ such that for any $(s,a,h)\in\cS\times\cA\times[H]$, we have $Q_h^\pi(s,a)=\langle\bphi(s,a),\wb_h^\pi\rangle$. Moreover, we have $\|\wb_h^\pi\|_2\leq 2H\sqrt{d}$ for all $h\in[H]$. 
\end{proposition}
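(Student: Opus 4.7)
The plan is to derive the linear representation directly from the Bellman equation for a fixed policy, namely
\[
Q_h^\pi(s,a) \;=\; r_h(s,a) \;+\; \bigl[\PP_h V_{h+1}^\pi\bigr](s,a),
\]
and then substitute the two linear structures guaranteed by Definition \ref{assumption-linear}. The reward term is already $\la\bphi(s,a),\btheta_h\ra$ by assumption, so the only work is to rewrite the expected next-stage value as an inner product against $\bphi(s,a)$.

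For that step, I would use the transition representation $\PP_h(s'\mid s,a)=\la\bphi(s,a),\bmu_h(s')\ra$ and interchange the inner product with the integral over $s'$:
\[
[\PP_h V_{h+1}^\pi](s,a) \;=\; \int V_{h+1}^\pi(s')\,\la\bphi(s,a),\bmu_h(s')\ra\,ds' \;=\; \Big\la \bphi(s,a),\;\int V_{h+1}^\pi(s')\,\bmu_h(s')\,ds'\Big\ra.
\]
Defining
\[
\wb_h^\pi \;:=\; \btheta_h \;+\; \int V_{h+1}^\pi(s')\,\bmu_h(s')\,ds',
\]
immediately yields $Q_h^\pi(s,a)=\la\bphi(s,a),\wb_h^\pi\ra$ for every $(s,a,h)$. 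The construction is backward-inductive in spirit (we need $V_{h+1}^\pi$ to exist), but no induction on the linear form itself is needed since the definition of $\wb_h^\pi$ uses only $V_{h+1}^\pi$, which is always well-defined.

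For the norm bound, I would use the triangle inequality together with $\|\btheta_h\|_2\le\sqrt d$ and the fact that $V_{h+1}^\pi(s')\in[0,H]$ because the per-stage rewards lie in $[0,1]$ and at most $H-h$ stages remain. Bounding the integral coordinate-wise by $H$ times the total measure and then taking the Euclidean norm across coordinates gives
\[
\Big\|\int V_{h+1}^\pi(s')\,\bmu_h(s')\,ds'\Big\|_2 \;\le\; H\,\|\bmu_h(\cS)\|_2 \;\le\; H\sqrt d,
\]
so $\|\wb_h^\pi\|_2 \le \sqrt d + H\sqrt d \le 2H\sqrt d$ for $H\ge 1$. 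The only mildly delicate point is the coordinate-wise bound on the integral: it is immediate when each $\mu_h^{(i)}$ is a nonnegative measure, and for signed measures one can replace $\|\bmu_h(\cS)\|_2$ by the Euclidean norm of the vector of total variations, which is what the standing assumption should be read as. This is the step I would double-check most carefully, but it is otherwise routine.
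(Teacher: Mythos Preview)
The paper does not supply its own proof of this proposition; it is quoted verbatim as Proposition~2.3 of \citet{jin2019provably} and left unproved here. Your argument is correct and is exactly the standard derivation from that reference: write the Bellman identity $Q_h^\pi=r_h+\PP_hV_{h+1}^\pi$, plug in the linear forms for $r_h$ and $\PP_h$, and read off $\wb_h^\pi=\btheta_h+\int V_{h+1}^\pi\,d\bmu_h$; the norm bound then follows from $\|\btheta_h\|_2\le\sqrt d$, $0\le V_{h+1}^\pi\le H$, and $\|\bmu_h(\cS)\|_2\le\sqrt d$. Your caveat about signed measures is well placed but does not affect the conclusion under the usual reading of the assumption.
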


Therefore, with the known feature mapping $\bphi(\cdot,\cdot)$, it suffices to estimate the weight vectors $\{\wb_h^\pi\}_{h\in[H]}$ in order to recover the action-value functions. This is the core idea behind almost all the algorithms and theoretical analyses for linear MDPs.

\subsection{Models for Limited Adaptivity}
In this work, we consider RL algorithms with limited adaptivity. There are two typical models for online learning with such limited adaptivity: \emph{batch learning model} \citep{perchet2016batched} and \emph{rare policy switch model} \citep{abbasi2011improved}. 

For the batch learning model, the agent pre-determines the batch grids $1 = t_1<t_2<\cdots<t_B<t_{B+1} = K+1$ at the beginning of the algorithm, where $B$ is the number of batches. The $b$-th batch consists of $t_b$-th to $(t_{b+1}-1)$-th episodes, and the agent follows the same policy within each batch. The adaptivity is measured by the number of batches. 

For the rare policy switch model, the agent can decide whether she wants to switch the current policy or not. The adaptivity is measured by the number of policy switches, which is defined as
\begin{align*}
    N_{\text{switch}}=\sum_{k=1}^{K-1}\ind\{\pi^k\neq \pi^{k+1}\},
\end{align*}
where $\pi^k\neq\pi^{k+1}$ means that there exists some $(h,s)\in[H]\times\cS$ such that $\pi_h^k(s)\neq\pi_h^{k+1}(s)$. It is worth noting that $N_{\text{switch}}$ is identical to the \emph{global switching cost} defined in  \eqref{eq: switching cost}.

Given a budget on the number of batches or the number of policy switches, we aim to design RL algorithms with linear function approximation that can achieve the same regret as their full adaptivity counterpart, e.g., LSVI-UCB \citep{jin2020provably}.

\section{RL in the Batch Learning Model}\label{sec:batch}

In this section, we consider RL with linear function approximation in the batch learning model, where given the number of batches $B$, we need to pin down the batches before the agent starts to interact with the environment. 

\subsection{Algorithm and Regret Analysis}

We propose LSVI-UCB-Batch algorithm as displayed in Algorithm \ref{alg: batch}, which can be regarded as a variant of the LSVI-UCB algorithm proposed in \citet{jin2020provably} yet with limited adaptivity. Algorithm~\ref{alg: batch} takes a series of batch grids $\{t_1,\dots, t_{B+1}\}$ as input, where the $i$-th batch starts at $t_i$ and ends at $t_{i+1}-1$. LSVI-UCB-Batch takes the uniform batch grids as its selection of grids, i.e., $t_i = (i-1)\cdot\lfloor K/B\rfloor+1, i\in [B]$. By Proposition \ref{lemma: linear Q}, we know that for each $h \in [H]$, the optimal value function $\qvalue_h^*$ has the linear form $\la \bphi(\cdot, \cdot), \wb_h^*\ra$. Therefore, to estimate the $\qvalue_h^*$, it suffices to estimate $\wb_h^*$. At the beginning of each batch, Algorithm \ref{alg: batch} calculates $\wb_h^k$ as an estimate of $\wb_h^*$ by ridge regression (Line \ref{alg:batch_w}). Meanwhile, in order to measure the uncertainty of $\wb_h^k$, Algorithm \ref{alg: batch} sets the estimate $\qvalue_h^k(\cdot, \cdot)$ as the summation of the linear function $\la \bphi(\cdot, \cdot), \wb_h^k\ra$ and a Hoeffding-type exploration bonus term $\Gamma_h^k(\cdot, \cdot)$ (Line \ref{alg:batch_q}), which is calculated based on the confidence radius $\beta$. Then it sets the policy $\pi_h^k$ as the greedy policy with respect to $\qvalue_h^k$. Within each batch, Algorithm \ref{alg: batch} simply keeps the policy used in the previous episode without updating (Line \ref{alg:batch_repeat}). Apparently, the number of batches of Algorithm~\ref{alg: batch} is $B$.

Here we would like to make a comparison between our LSVI-UCB-Batch and other related algorithms. The most related algorithm is LSVI-UCB proposed in \citet{jin2020provably}. 
The main difference between LSVI-UCB-Batch and LSVI-UCB is the introduction of batches. 
In detail, when $B = K$, LSVI-UCB-Batch degenerates to LSVI-UCB. Another related algorithm is the SBUCB algorithm proposed by \citet{han2020sequential}. 
Both LSVI-UCB-Batch and SBUCB take uniform batch grids as the selection of batches. 
The difference is that SBUCB is designed for linear bandits, which is a special case of episodic MDPs with $H=1$. 



\begin{algorithm}[t]
	\caption{LSVI-UCB-Batch}
	\label{alg: batch}
	\begin{algorithmic}[1]
	\REQUIRE Number of batches $B$, confidence radius $\beta$, regularization parameter $\lambda$
	\STATE Set $b \leftarrow 1$, $t_i \leftarrow (i-1)\cdot\lfloor K/B\rfloor+1, i\in [B]$ 
	\FOR{episode $k=1,2,\dots,K$}
	\STATE Receive the initial state $s_1^k$
	\IF{$k = t_{b}$}
	\STATE $b \leftarrow b+1$, $Q_{H+1}^k(\cdot, \cdot)\leftarrow 0$
	\FOR{stage $h=H,H-1,\dots,1$}
	\STATE $\bLambda_h^k\leftarrow \sum_{\tau=1}^{k-1} \bphi(s_h^{\tau},a_h^{\tau}) \bphi(s_h^{\tau},a_h^{\tau})^\top + \lambda \Ib$
	\STATE $\wb_h^k \leftarrow (\bLambda_h^k)^{-1} \sum_{\tau=1}^{k-1} \bphi(s_h^{\tau},a_h^{\tau}) \cdot [r_h(s_h^{\tau},a_h^{\tau})+\max_{a\in\cA} Q_{h+1}^k(s_{h+1}^\tau,a)]$\alglinelabel{alg:batch_w}
	\STATE $\Gamma_h^k(\cdot,\cdot)\leftarrow \beta\cdot [\bphi(\cdot,\cdot)^\top(\bLambda_h^k)^{-1} \bphi(\cdot,\cdot)]^{1/2}$\alglinelabel{alg:batch_g}
	\STATE $Q_h^k(\cdot,\cdot) \leftarrow \min\{\bphi(\cdot,\cdot)^\top\wb_h^k + \Gamma_h^k(\cdot,\cdot), H-h+1\}^+$, $\pi_h^k(\cdot)\leftarrow \argmax_{a\in\cA} Q_h^{k}(\cdot,a)$\alglinelabel{alg:batch_q}
	\ENDFOR
	\ELSE 
	\STATE $Q_h^k\leftarrow Q_h^{k-1}$, $\pi_h^k\leftarrow \pi_h^{k-1}$, $\forall h \in [H]$\alglinelabel{alg:batch_repeat}
	\ENDIF
	\FOR{stage $h = 1 \dots, H$}
	\STATE Take the action $a_h^k\leftarrow \pi_h^k(s_h^k)$, receive the reward $r_h(s_h^k,a_h^k)$ and the next state $s_{h+1}^k$
	\ENDFOR
	\ENDFOR
	\end{algorithmic}
\end{algorithm}

The following theorem presents the regret bound of Algorithm \ref{alg: batch}.

\begin{theorem}\label{thm: regret budget1}
There exists a constant $c>0$ such that for any $\delta\in(0,1)$, if we set $\lambda=1$, $\beta=cdH\sqrt{\log(2dT/\delta)}$, then under Assumption \ref{assumption-linear}, the total regret of Algorithm \ref{alg: batch} is bounded by
\begin{align}
    \text{Regret}(T)\leq& 2H\sqrt{T\log\rbr{\frac{2dT}{\delta}}} + \frac{dHT}{2B\log 2}\log\rbr{\frac{T}{dH}+1}\notag \\
    &\qquad + 4c\sqrt{2d^3H^3T\log\rbr{\frac{2dT}{\delta}}\log\bigg(\frac{T}{dH}+1\bigg)}\notag
\end{align}
with probability at least $1-\delta$.
\end{theorem}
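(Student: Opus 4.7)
The plan is to follow the analysis template of LSVI-UCB from \citet{jin2019provably}, with modifications to handle the stale covariance matrices used within each batch. For every episode $k\in[K]$ let $b(k)\in[B]$ denote the index of the batch containing $k$, so that $t_{b(k)}\leq k<t_{b(k)+1}$, and define the ``live'' covariance $\tilde\bLambda_h^k:=\lambda\Ib+\sum_{\tau=1}^{k-1}\bphi(s_h^\tau,a_h^\tau)\bphi(s_h^\tau,a_h^\tau)^\top$, which agrees with $\bLambda_h^k$ from Algorithm~\ref{alg: batch} only at batch starts. First I would establish the concentration bound: with probability at least $1-\delta/2$, for every batch start $t_b$, every stage $h$, and every $(s,a)\in\cS\times\cA$,
\begin{align*}
\bigl|\la\bphi(s,a),\wb_h^{t_b}\ra - r_h(s,a) - [\PP_h V_{h+1}^{t_b}](s,a)\bigr|\leq \beta\,\|\bphi(s,a)\|_{(\bLambda_h^{t_b})^{-1}}.
\end{align*}
This is precisely the concentration of \citet{jin2019provably}: $\wb_h^{t_b}$ uses only data from episodes $1,\dots,t_b-1$ and $\bLambda_h^{t_b}$ is the matching regularized covariance, so the standard self-normalized Hoeffding inequality combined with an $\varepsilon$-net over the class of optimistic value functions carries over unchanged. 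Under this event, backward induction on $h$ yields optimism $Q_h^k(s,a)\geq Q_h^*(s,a)$, and the standard Bellman-telescope decomposition gives
\begin{align*}
\text{Regret}(T)\leq 2\sum_{k=1}^K\sum_{h=1}^H\min\bigl\{\beta\|\bphi(s_h^k,a_h^k)\|_{(\bLambda_h^{t_{b(k)}})^{-1}},\; H\bigr\}+M_T,
\end{align*}
where $|M_T|\leq 2H\sqrt{T\log(2dT/\delta)}$ by Azuma-Hoeffding with probability $\geq 1-\delta/2$; this recovers the first term of the theorem.

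The main technical step is bounding the deterministic sum of stale bonuses, and this is where the batched setting really departs from the full-adaptivity analysis. I would split the pairs $(b,h)\in[B]\times[H]$ according to determinant growth within a batch: a pair is \emph{good} if $\det(\bLambda_h^{t_{b+1}})\leq 2\det(\bLambda_h^{t_b})$ and \emph{bad} otherwise. On a good pair, the elementary inequality $\|\bphi\|_{\bB^{-1}}^2\leq(\det\bA/\det\bB)\,\|\bphi\|_{\bA^{-1}}^2$ valid for $\bA\succeq\bB\succ 0$ (which follows from $\lambda_{\max}(\bB^{-1}\bA)\leq\det(\bA)/\det(\bB)$), applied with $\bA=\tilde\bLambda_h^{k+1}\preceq\bLambda_h^{t_{b+1}}$ and $\bB=\bLambda_h^{t_b}$, combined with the monotonicity $\|\bphi_h^k\|_{(\tilde\bLambda_h^{k+1})^{-1}}\leq\|\bphi_h^k\|_{(\tilde\bLambda_h^k)^{-1}}$, produces $\|\bphi_h^k\|_{(\bLambda_h^{t_b})^{-1}}\leq\sqrt{2}\,\|\bphi_h^k\|_{(\tilde\bLambda_h^k)^{-1}}$. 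Summing over good $(k,h)$, invoking the elliptical-potential bound $\sum_k\|\bphi_h^k\|_{(\tilde\bLambda_h^k)^{-1}}^2\leq 2d\log(T/(dH)+1)$ for each $h$, and finishing with Cauchy-Schwarz over the $KH=T$ terms yields the $O(\sqrt{d^3H^3T\log(2dT/\delta)\log(T/(dH)+1)})$ piece. On a bad pair, I would use only the crude bound $\min\{\beta\|\bphi\|_{(\bLambda_h^{t_b})^{-1}},H\}\leq H$ together with a counting argument: since $\det(\bLambda_h^{K+1})/\det(\lambda\Ib)\leq(1+T/(dH))^d$ for every $h$, the total number of bad $(b,h)$ pairs is at most $dH\log_2(T/(dH)+1)$, and each batch contains $\lceil K/B\rceil=\lceil T/(HB)\rceil$ episodes, so the bad contribution is at most $H\cdot T/(HB)\cdot dH\log_2(T/(dH)+1)=dHT\log(T/(dH)+1)/(B\log 2)$, which is exactly the middle term of the theorem.

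The hard part is the bonus-splitting argument above. The clipping $\min\{\cdot,H\}$ is essential, because a single stale bonus can be as large as $\beta\asymp dH\sqrt{\log T}$, so without clipping the bad-pair contribution would inflate by an extra factor of $d$ and the $dHT/B$ rate would be lost. Conversely, the determinant-doubling threshold is what lets me convert stale bonuses into live ones on good pairs at only constant multiplicative cost, enabling the standard elliptical-potential lemma. Adding the martingale term, the good-pair contribution, and the bad-pair contribution then produces the regret bound claimed in the theorem.
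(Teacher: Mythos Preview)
Your approach is essentially the paper's: the concentration/optimism/Azuma steps are packaged there as Lemma~\ref{lemma:rough_bound}, the live-bonus bound is Lemma~\ref{lemma: bonus bound}, and your batch-level determinant-doubling split is precisely how the paper proves Lemma~\ref{lemma:except} (they define the bad set at the $(k,h)$ level via the bonus ratio $\Gamma_h^{b_k}/\Gamma_h^k>2$, but bound its cardinality by passing to the same batch-level set $\hat\cC_h=\{b:\det(\bLambda_h^{t_{b+1}})>4\det(\bLambda_h^{t_b})\}$ and multiplying by the batch size). One minor bookkeeping caveat: with your det-ratio threshold of $2$ (rather than $4$) and the factor $2$ placed outside the $\min$, your bad-pair contribution evaluates to $2dHT\log(T/(dH)+1)/(B\log 2)$, not the theorem's $\frac{dHT}{2B\log 2}\log(T/(dH)+1)$, so ``exactly the middle term'' is off by a factor of $4$; using the clipping $\min\{H,2\Gamma_h^{b_k}\}$ as in the paper and taking the det-ratio threshold to be $4$ recovers the stated constants.
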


Theorem \ref{thm: regret budget1} suggests that the total regret of Algorithm \ref{alg: batch} is bounded by $\tilde O(\sqrt{d^3H^3T} + dHT/B)$. 
When $B = \Omega(\sqrt{T/dH})$, the regret of Algorithm \ref{alg: batch} is $\tilde O(\sqrt{d^3H^3T})$, which is the same as that of LSVI-UCB in \citet{jin2020provably}. 
However, it is worth noting that LSVI-UCB needs $K$ batches, while Algorithm \ref{alg: batch} only requires $\sqrt{T/dH}$ batches, which can be much smaller than $K$. 

Next, we present a lower bound to show the dependency of the total regret on the number of batches for the batch learning model.

\begin{theorem}\label{thm:lower1}
Suppose that $B \geq (d-1)H/2$. Then for any batch learning algorithm with $B$ batches, there exists a linear MDP such that the regret over the first $T$ rounds is lower bounded by
\begin{align}
    \text{Regret}(T) = \Omega(dH\sqrt{T} + dHT/B).\notag
\end{align}
\end{theorem}
Theorem \ref{thm:lower1} suggests that in order to obtain a standard $\sqrt{T}$-regret, the number of batches $B$ should be at least in the order of $\Omega(\sqrt{T})$, which is similar to its counterpart for batched linear bandits \citep{han2020sequential}. 

\section{RL in the Rare Policy Switch Model}\label{sec:rare}

In this section, we consider the rare policy switch model, where the agent can adaptively choose the batch sizes according to the information collected during the learning process.

\subsection{Algorithm and Regret Analysis}

We first present our second algorithm, LSVI-UCB-RareSwitch, as illustrated in Algorithm \ref{alg: rare switch}. Again, due to the nature of linear MDPs, we only need to estimate $\wb_h^*$ by ridge regression, and then calculate the optimistic action-value function using the Hoeffding-type exploration bonus $\Gamma_h^k(\cdot,\cdot)$ along with the confidence radius $\beta$. Note that the size of the bonus term in $Q_h^k$ is determined by $\bLambda_h^k$. Intuitively speaking, the matrix $\bLambda_h^k$ in Algorithm \ref{alg: rare switch} represents how much information has been learned about the underlying MDP, and the agent only needs to switch the policy after collecting a significant amount of additional information. This is reflected by the determinant of $\bLambda_h^k$, and the upper confidence bound will become tighter (shrink) as $\det(\bLambda_h^k)$ increases. The determinant based criterion is similar to the idea of doubling trick, which has been used in the rarely switching OFUL algorithm for stochastic linear bandits \citep{abbasi2011improved}, UCRL2 algorithm for tabular MDPs \citep{jaksch2010near}, and UCLK/UCLK+ for linear mixture MDPs in the discounted setting \citep{zhou2020provably,zhou2020nearly}.

\begin{algorithm}[t]
	\caption{LSVI-UCB-RareSwitch}
	\label{alg: rare switch}
	\begin{algorithmic}[1]
	\REQUIRE Policy switch parameter $\eta$, confidence radius $\beta$, regularization parameter $\lambda$ 
	\STATE Initialize $\bLambda_h=\bLambda_h^0=\lambda\Ib_d$ for all $h\in[H]$
	\FOR{episode $k=1,2,\dots,K$}
	\STATE Receive the initial state $s_1^k$
	\FOR{stage $h=1, 2, \cdots, H$}
    \STATE $\bLambda_h^k\leftarrow \sum_{\tau=1}^{k-1} \bphi(s_h^{\tau},a_h^{\tau}) \bphi(s_h^{\tau},a_h^{\tau})^\top + \lambda \Ib_d$ \alglinelabel{alg rare switch: Lambda^k}
    \ENDFOR
    \IF{$\exists h\in[H], \det(\bLambda_h^k)>\eta\cdot\det(\bLambda_h)$} \alglinelabel{alg rare switch: log det}
    \STATE $Q_{H+1}^k(\cdot,\cdot)\leftarrow 0$
    \FOR{step $h=H, H-1,\cdots,1$}
    \STATE $\bLambda_h\leftarrow\bLambda_h^k$ \alglinelabel{alg rare switch: Lambda}
    \STATE $\wb_h^k \leftarrow (\bLambda_h^k)^{-1} \sum_{\tau=1}^{k-1} \bphi(s_h^{\tau},a_h^{\tau}) \cdot [r_h(s_h^{\tau},a_h^{\tau})+\max_{a\in\cA} Q_{h+1}^k(s_{h+1}^\tau,a)]$ \alglinelabel{alg rare switch: policy evaluation start}
    \STATE $\Gamma_h^k(\cdot,\cdot)\leftarrow \beta\cdot [\bphi(\cdot,\cdot)^\top (\bLambda_h^k)^{-1} \phi(\cdot,\cdot)]^{1/2}$
	\STATE $Q_h^k(\cdot,\cdot) \leftarrow \min\{\phi(\cdot,\cdot)^\top \wb_h^k + \Gamma_h^k(\cdot,\cdot), H-h+1\}^+$, $\pi_h^k(\cdot) \leftarrow \argmax_{a\in\cA} Q_h^k(\cdot,a)$ \alglinelabel{alg rare switch: policy evaluation end}
    \ENDFOR
    \ELSE
    \STATE $Q_h^k\leftarrow Q_h^{k-1}$, $\pi_h^k\leftarrow\pi_h^{k-1}$, $\forall h\in[H]$ \alglinelabel{alg rare switch: not change policy}
    \ENDIF
	\FOR{stage $h = 1 \dots, H$}
	\STATE Take the action $a_h^k\leftarrow \pi_h^k(s_h^k)$, receive the reward $r_h(s_h^k,a_h^k)$ and the next state $s_{h+1}^k$
	\ENDFOR
	\ENDFOR
	\end{algorithmic}
\end{algorithm}

As shown in Algorithm \ref{alg: rare switch}, for each stage $h\in[H]$ the algorithm maintains a matrix $\bLambda_h$ which is updated at each policy switch (Line \ref{alg rare switch: Lambda}). For every $k\in[K]$, we denote by $b_k$ the episode from which the policy $\pi_k$ is computed. This is consistent with the one defined in Algorithm \ref{alg: batch} in Section \ref{sec:batch}. At the start of each episode $k$, the algorithm computes $\{\bLambda_h^k\}_{h\in[H]}$ (Line \ref{alg rare switch: Lambda^k}) and then compares them with $\{\bLambda_h\}_{h\in[H]}$ using the determinant-based criterion (Line \ref{alg rare switch: log det}). The agent switches the policy if there exists some $h\in[H]$ such that $\det(\bLambda_h^k)$ has increased by some pre-determined parameter $\eta>1$, followed by policy evaluation (Lines \ref{alg rare switch: policy evaluation start}-\ref{alg rare switch: policy evaluation end}). Otherwise, the algorithm retains the previous policy (Line \ref{alg rare switch: not change policy}). Here the hyperparameter $\eta$ controls the frequency of policy switch, and the total number of policy switches can be bounded by a function of $\eta$. 

Algorithm \ref{alg: rare switch} is also a variant of LSVI-UCB proposed in \citet{jin2020provably}. Compared with LSVI-UCB-Batch in Algorithm \ref{alg: batch} for the batch learning model, LSVI-UCB-RareSwitch adaptively decides when to switch the policy and can be tuned by the hyperparameter $\eta$ and therefore fits into the rare policy switch model.  

We present the regret bound of Algorithm \ref{alg: rare switch} in the following theorem. 
\begin{theorem}\label{thm: regret rare switch}
There exists some constant $c>0$ such that for any $\delta\in(0,1)$, if we set $\lambda=1$,  $\beta=cdH\sqrt{\log(2dT/\delta)}$ and $\eta = \rbr{1+K/d}^{dH/B}$, then the number of policy switches $N_{\text{switch}}$ in Algorithm \ref{alg: rare switch} will not exceed $B$. Moreover, the total regret of Algorithm \ref{alg: rare switch} is bounded by 
\begin{align}
    &\text{Regret}(T)\leq 2H\sqrt{T\log\rbr{\frac{2dT}{\delta}}} +2c\sqrt{2 d^3H^3T} \cdot\sqrt{\rbr{\frac{T}{dH}+1}^{\frac{dH}{B}}\log\left(\frac{T}{dH}+1\right)\log\rbr{\frac{2dT}{\delta}}}\label{eq: regret rare switch}
\end{align}
with probability at least $1-\delta$.
\end{theorem}

A few remarks are in order.
\begin{remark}
Algorithm \ref{alg: rare switch} needs to update the value of each $\det(\bLambda_h^k)$, and thanks to the special structure of $\bLambda_h^k$, this can be done efficiently by applying the matrix determinant lemma along with the Sherman Morrison formula for efficiently updating each $(\bLambda_h^k)^{-1}$. For simplicity and clarity of the presentation, we do not include these details in the pseudo-code.
\end{remark}

\begin{remark}
By ignoring the non-dominating term, Theorem \ref{thm: regret rare switch} suggests that the total regret of Algorithm \ref{alg: rare switch} is bounded by $\tilde O(\sqrt{d^3H^3T[1+T/(dH)]^{dH/B}})$. Also, if we are allowed to choose $B$, we can choose $B=\Omega(dH\log T)$ to achieve $\tilde O(\sqrt{d^3H^3T})$ regret, which is the same as that of LSVI-UCB in \citet{jin2020provably}. This also significantly improves upon Algorithm \ref{alg: batch} when $T$ is sufficiently large since previously we need $B=\Omega(\sqrt{T/dH})$. 
Our result exhibits a trade-off between the total regret bound and the number of policy switches, i.e., as the adaptivity budget $B$ increases, the regret bound decreases. 
This will also be reflected by the numerical results later in Section \ref{sec: experiment}.
\end{remark}

\begin{remark}
Concurrent to our work, \citet{gao2021provably} proposed an algorithm with $B=\Omega(dH\log T)$ policy switches. Note that $B=\Omega(dH\log T)$ corresponds to choosing $\eta$ to be a constant, which can be viewed as a special case of our algorithm. Their algorithm does not adapt to different values of budget $B$. Also, they did not study the batch learning model (Section~\ref{sec:batch}) which we think is of equally important practical interest.

\end{remark}

\begin{remark} \citet{gao2021provably} established a lower bound, which claims that any rare policy switch RL algorithm suffers a linear regret when $B = \tilde o(dH)$. However, unlike our lower bound for the batch learning model (Theorem \ref{thm:lower1}),  their result does not provide a fine-grained regret lower bound for arbitrary adaptivity constraint $B$. It remains an open problem to establish such kind of lower bound for the rare policy switch model.
\end{remark}



\section{Numerical Experiment}\label{sec: experiment}
In this section, we provide numerical experiments to support our theory. We run our algorithms, LSVI-UCB-Batch and LSVI-UCB-RareSwitch, on a synthetic linear MDP given in Example \ref{eg: mdp1}, and compare them with the fully adaptive baseline, LSVI-UCB \citep{jin2020provably}.

\begin{example}[Hard-to-learn linear MDP, \citealt{zhou2020provably}]\label{eg: mdp1}
Let $d>0$ be some integer and $\delta\in(0,1)$ be a constant. The state space $\cS=\{0,1\}$ consists of two states, and the action space $\cA=\{\pm 1\}^{d-3}$ contains $2^{d-3}$ actions where each action is represented by a $(d-3)$-dimensional vector $\ab$. For each state-action pair $(s,\ab)\in\cS\times\cA$, the feature vector is given by
\begin{align}\label{eq: eg phi}
    \bphi(s,a) = \begin{cases}
    (-\ab^\top, 1-\delta, \delta)^\top & s=0,\\
    (0,\ldots,0,\delta,1-\delta) & s=1.
    \end{cases}
\end{align}
For each $h\in[H]$, let $\bgamma_h\in \{\pm\delta/(d-2)\}^{d-2}$ and define the corresponding vector-valued measure as
\begin{align}\label{eq: eg mu}
    \bmu_h(s) = \begin{cases}
    (\bgamma_h^\top,1,0)^\top & s=0\\
    (-\bgamma_h^\top,0,1)^\top & s=1
    \end{cases}.
\end{align}
Finally, we set $\btheta_h\equiv (0,\ldots,0,-\delta/(1-2\delta),(1-\delta)/(1-2\delta))\in\RR^d$ for all $h\in[H]$.
\end{example}

It is straightforward to verify that the feature vectors in \eqref{eq: eg phi} and the vector-valued measures in \eqref{eq: eg mu} constitute a valid linear MDP such that, for all $\ab\in\cA$ and $h\in[H]$,
\begin{align*}
    r_h(s,\ab) = \ind\{s=1\}, \qquad \PP_h(s'|s,\ab) = \begin{cases}
    1-\delta - \langle\ab,\gamma_h\rangle & (s,s')=(0, 0),\\
    \delta+\langle\ab,\bgamma_h\rangle  & (s,s')=(0,1),\\
    \delta & (s,s')=(1,0),\\
    1-\delta & (s,s')=(1,1).
    \end{cases} 
\end{align*}
In our experiment\footnote{All experiments are performed on a PC with Intel i7-9700K CPU.}, we set $H=10$, $K=2500$, $\delta=0.35$ and $d=13$, thus $\cA$ contains 1024 actions. Now we apply our algorithms, LSVI-UCB-Batch and LSVI-UCB-RareSwitch, to this linear MDP instance, and compare their performance with the fully adaptive baseline LSVI-UCB \citep{jin2020provably} under different parameter settings. In detail, for LSVI-UCB-Batch, we run the algorithm for $B=10,20,30,40,50$ respectively; for LSVI-UCB-RareSwitch, we set $\eta=2,4,8,16,32$. 
We plot the average regret ($\text{Regret}(T)/K$) against the number of episodes in Figure \ref{fig:exp_result}.
In addition to the regret of the proposed algorithms, we also plot the regret of a uniformly random policy (i.e., choosing actions uniformly randomly in each step) as a baseline.

\begin{figure}[t]\label{fig1}
    \begin{subfigure}[LSVI-UCB-Batch]{\label{fig:subfig:1.a}
        \includegraphics[width=0.49\linewidth]{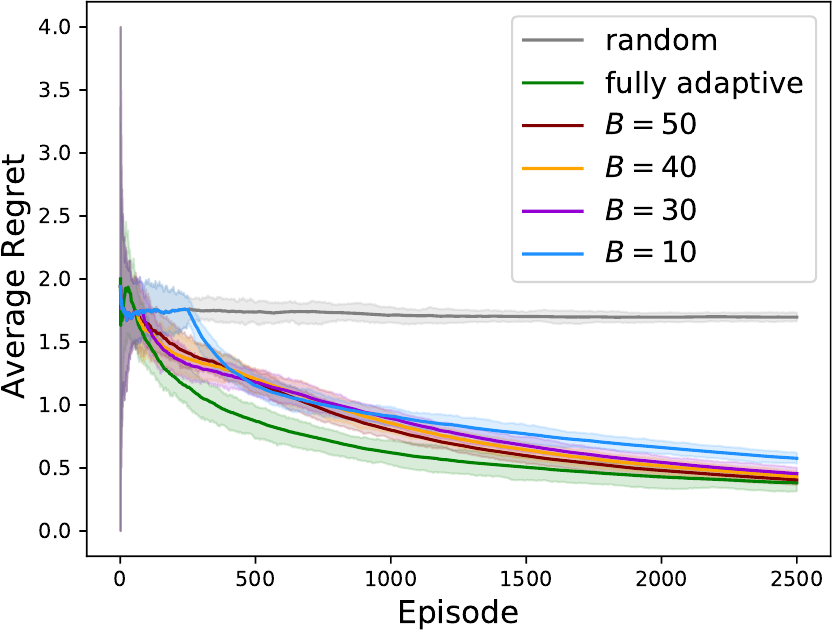}}
    \end{subfigure}
    \begin{subfigure}[LSVI-UCB-RareSwitch]
    {\label{fig:subfig:1.b} 
        \includegraphics[width=0.49\linewidth]{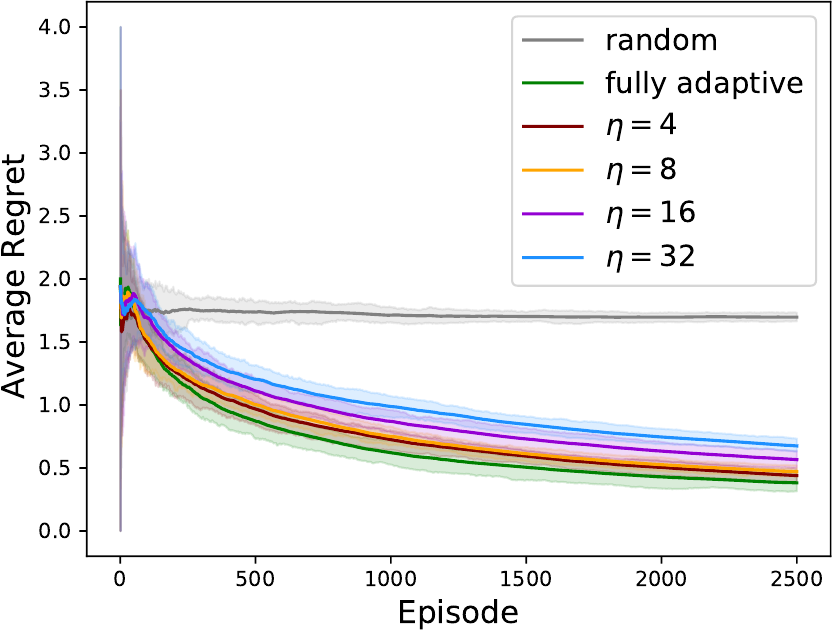}}
    \end{subfigure}
    \caption{Plot of average regret ($\text{Regret}(T)/K$) v.s. the number of episodes. The results are averaged over 50 rounds of each algorithm, and the error bars are chosen to be $[20\%,80\%]$ empirical confidence intervals.}
    \label{fig:exp_result}
\end{figure}

From Figure \ref{fig:exp_result}, we can see that for LSVI-UCB-Batch, when $B\approx \sqrt{K}$, it achieves a similar regret as the fully adaptive LSVI-UCB as it collects more and more trajectories.
For LSVI-UCB-RareSwitch, a constant value of $\eta$ yields a similar order of regret compared with LSVI-UCB as suggested by Theorem~\ref{thm: regret rare switch}. 
By comparing Figure \ref{fig:subfig:1.a} and \ref{fig:subfig:1.b}, we can see that the performance of LSVI-UCB-RareSwitch is consistently close to that of the fully-adaptive LSVI-UCB throughout the learning process, while the performance gap between LSVI-UCB-Batch and LSVI-UCB is small only when $k$ is large.
This suggests a better adaptivity of LSVI-UCB-RareSwitch than LSVI-UCB-Batch, which only updates the policy at prefixed time steps, thus being not adaptive enough.

Moreover, we can also see the trade-off between the regret and the adaptivity level: with more limited adaptivity (smaller $B$ or larger $\eta$) the regret gap between our algorithms and the fully adaptive LSVI-UCB becomes larger.
These results indicate that our algorithms can indeed achieve comparable performance as LSVI-UCB, even under adaptivity constraints. This corroborates our theory.

\section{Conclusions}\label{sec: conclusions}
In this work, we study online RL with linear function approximation under the adaptivity constraints. We consider both the batch learning model and the rare policy switch models and propose two new algorithms LSVI-UCB-Batch and LSVI-UCB-RareSwitch for each setting. We show that LSVI-UCB-Batch enjoys an $\tilde O(\sqrt{d^3H^3T} + dHT/B)$ regret and LSVI-UCB-RareSwitch enjoys an $\tilde O(\sqrt{d^3H^3T[1+T/(dH)]^{dH/B}})$ regret. Compared with the fully adaptive LSVI-UCB algorithm \citep{jin2020provably}, our  algorithms can achieve the same regret with a much fewer number of batches/policy switches. We also prove the regret lower bound for the batch learning learning model, which suggests that the dependency on $B$ in LSVI-UCB-Batch is tight.

For the future work, we would like to prove the regret lower bound for the rare policy switching model that explicitly depends on the given adaptivity budget $B$.


\section*{Acknowledgments and Disclosure of Funding}

We would like to thank the anonymous reviewers for their helpful comments. Part of this work was done when DZ and QG participated the Theory of Reinforcement Learning program at the Simons Institute for the Theory of Computing in Fall 2020.
DZ and QG are partially supported by the National Science Foundation CAREER Award 1906169, IIS-1904183 and AWS Machine Learning Research Award. The views and conclusions contained in this paper are those of the authors and should not be interpreted as representing any funding agencies.





\bibliography{reference.bib}
\bibliographystyle{ims.bst}

\section*{Checklist}


\begin{enumerate}

\item For all authors...
\begin{enumerate}
  \item Do the main claims made in the abstract and introduction accurately reflect the paper's contributions and scope?
    \answerYes{}
  \item Did you describe the limitations of your work?
    \answerYes{}
  \item Did you discuss any potential negative societal impacts of your work? 
    \answerNA{} The goal of our paper is to develop general algorithms and theoretical analyses for RL with linear function approximation under adaptivity constraints. In this regard, we believe there are no societal impacts because this paper is mainly a theoretical work. 
  \item Have you read the ethics review guidelines and ensured that your paper conforms to them?
    \answerYes{}
\end{enumerate}

\item If you are including theoretical results...
\begin{enumerate}
  \item Did you state the full set of assumptions of all theoretical results?
    \answerYes{}
	\item Did you include complete proofs of all theoretical results?
    \answerYes{}
\end{enumerate}

\item If you ran experiments...
\begin{enumerate}
  \item Did you include the code, data, and instructions needed to reproduce the main experimental results (either in the supplemental material or as a URL)?
    \answerNo{}
  \item Did you specify all the training details (e.g., data splits, hyperparameters, how they were chosen)?
    \answerYes{}
	\item Did you report error bars (e.g., with respect to the random seed after running experiments multiple times)?
    \answerYes{}
	\item Did you include the total amount of compute and the type of resources used (e.g., type of GPUs, internal cluster, or cloud provider)?
    \answerYes{}
\end{enumerate}

\item If you are using existing assets (e.g., code, data, models) or curating/releasing new assets...
\begin{enumerate}
  \item If your work uses existing assets, did you cite the creators?
    \answerNA{} We do not use any existing assets.
  \item Did you mention the license of the assets?
    \answerNA{}
  \item Did you include any new assets either in the supplemental material or as a URL?
    \answerNA{}
  \item Did you discuss whether and how consent was obtained from people whose data you're using/curating?
    \answerNA{}
  \item Did you discuss whether the data you are using/curating contains personally identifiable information or offensive content?
    \answerNA{}
\end{enumerate}

\item If you used crowdsourcing or conducted research with human subjects...
\begin{enumerate}
  \item Did you include the full text of instructions given to participants and screenshots, if applicable?
    \answerNA{} Our work does not involve human subjects.
  \item Did you describe any potential participant risks, with links to Institutional Review Board (IRB) approvals, if applicable?
    \answerNA{}
  \item Did you include the estimated hourly wage paid to participants and the total amount spent on participant compensation?
    \answerNA{}
\end{enumerate}

\end{enumerate}



\newpage
\appendix

\section{Additional Details on the Numerical Experiments}

\subsection{Log-scaled Plot of the Average Regret}

We also provide log-scaled plot of the average regret in Figure \ref{fig2}. We can see that the slope of the average regret curves for our proposed algorithms is similar to that of the fully adaptive LSVI-UCB, all indicating an $\tilde O(1/\sqrt{T})$ scaling.

\begin{figure}[t]
    \begin{subfigure}[LSVI-UCB-Batch]{\label{fig:subfig:2.a}
        \includegraphics[width=0.49\linewidth]{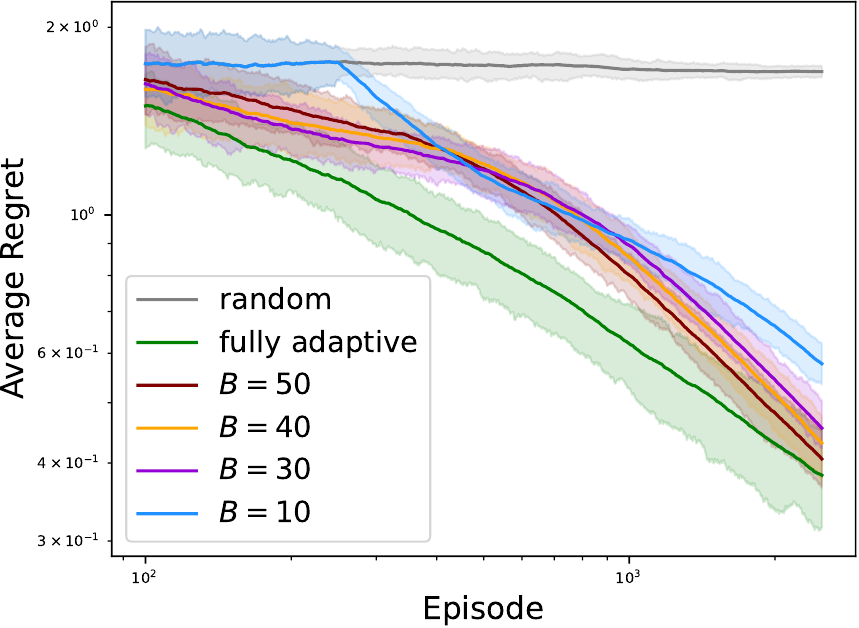}}
    \end{subfigure}
    \begin{subfigure}[LSVI-UCB-RareSwitch]
    {\label{fig:subfig:2.b} 
        \includegraphics[width=0.49\linewidth]{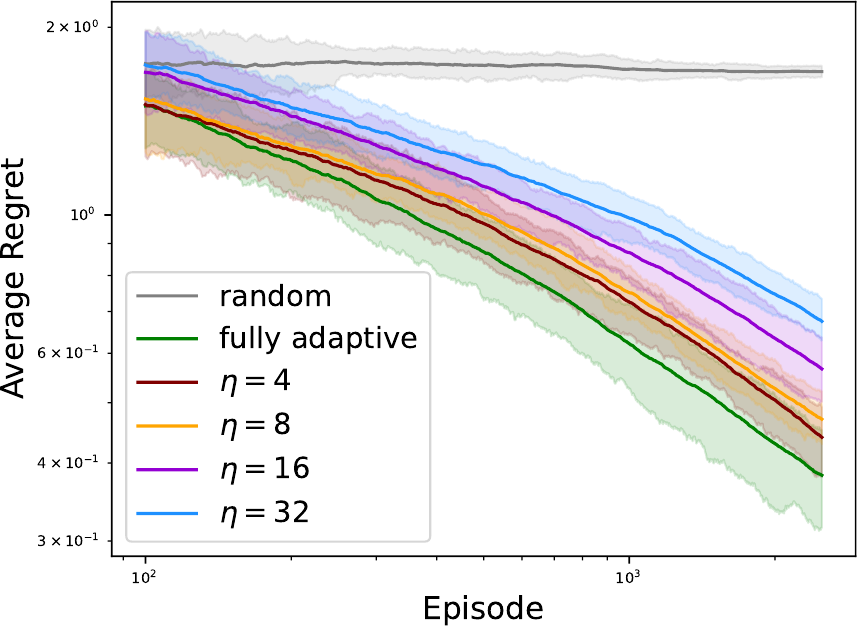}}
    \end{subfigure}
    \caption{Plot of average regret ($\text{Regret}(T)/K$) v.s. the number of episodes in log-scale. 
    The results are averaged over 50 rounds of each algorithm, and the error bars are chosen to be $[20\%,80\%]$ empirical confidence intervals.}
    \label{fig2}
\end{figure}

\subsection{Misspecified Linear MDP}
We also empirically evaluate our algorithms on linear MDP with different levels of misspecification.
In particular, based on the linear MDP instance constructed in Example \ref{eg: mdp1}, we follow the definition of $\zeta$-approximate linear MDP in \citet{jin2020provably}, and consider a corrupted transition given by
\begin{align*}
    \PP_h(s'|0,a) = (1-f(a))\bphi(0,a)^\top \bmu_h(s') + f(a)\ind\{s'=g(a)\}
\end{align*}
where $f:\cA\to[0,\zeta]$, $\zeta\in(0,1)$ and $g:\cA\to\cS$ are unknown.
The two additional functions, $f$ and $g$, can be constructed by random sampling before running the algorithms, and the magnitude of $\zeta\in(0,1)$ characterizes the level of model misspecification.
All the other components of the model and the experiment configurations remain the same as those in Section \ref{sec: experiment}.

\begin{figure}[t]
    \begin{subfigure}[LSVI-UCB-Batch ($B=50$)]{\label{fig:subfig:3.a}
        \includegraphics[width=0.49\linewidth]{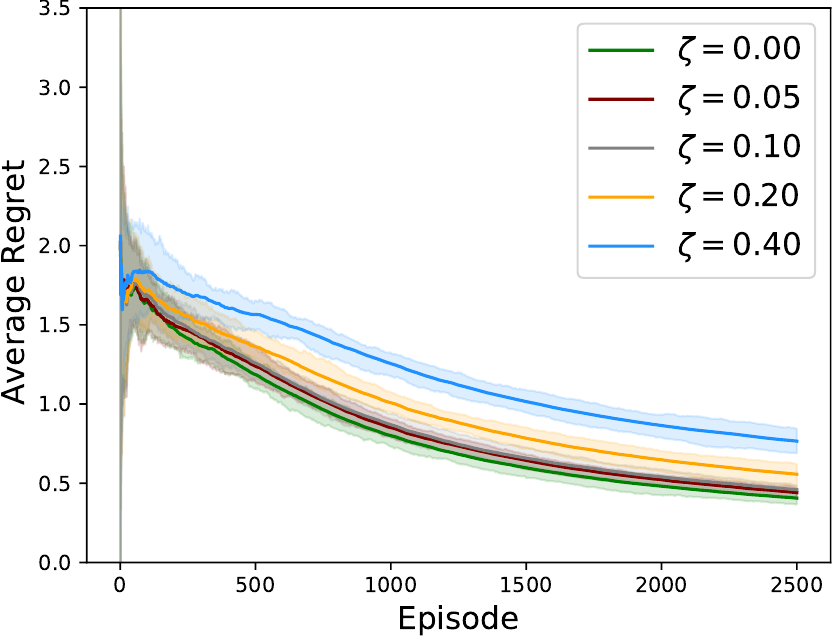}}
    \end{subfigure}
    \begin{subfigure}[LSVI-UCB-RareSwitch ($\eta=8$)]
    {\label{fig:subfig:3.b} 
        \includegraphics[width=0.49\linewidth]{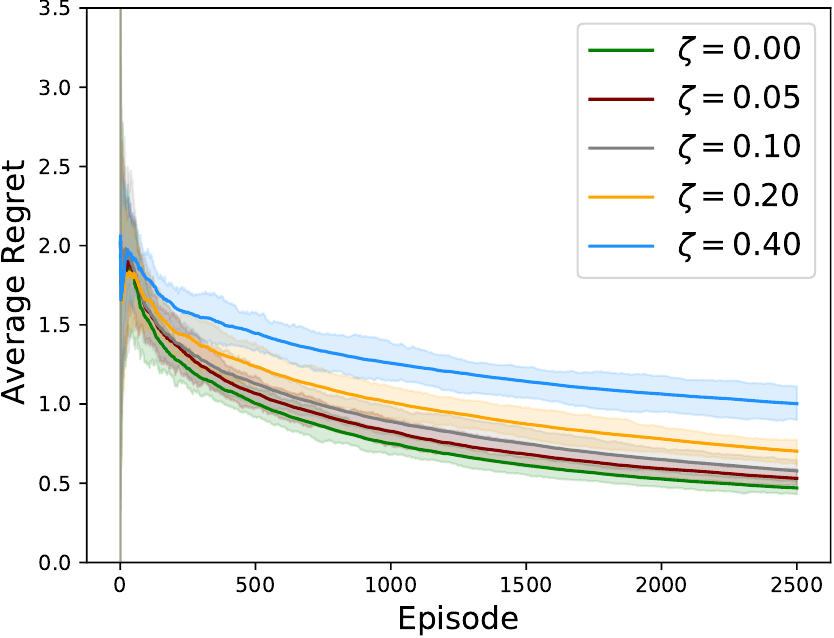}}
    \end{subfigure}
    \caption{Plot of average regret ($\text{Regret}(T)/K$) v.s. the number of episodes for a misspecified linear MDP. 
    The results are averaged over 50 rounds of each algorithm, and the error bars are chosen to be $[20\%,80\%]$ empirical confidence intervals.}
    \label{fig3}
\end{figure}

Under this misspecified model with levels $\zeta=0.05, 0.1, 0.2, 0.4$, we run LSVI-UCB-Batch with $B=50$ and LSVI-UCB-RareSwitch with $\eta=8$ respectively.
We plot the average regret of the algorithms in Figure \ref{fig3}.
We can see that our algorithms can still achieve a reasonably good performance under considerable levels of model misspecification.

\section{Proofs of Theorem \ref{thm: regret budget1}}
In this section we prove Theorem \ref{thm: regret budget1}

For simplicity, we use $b_k$ to denote the batch $t_b$ satisfying $t_b \leq k <t_{b+1}$. Let $\Gamma_h^k(\cdot, \cdot)$ be $\beta\cdot[\bphi(\cdot,\cdot)^\top(\bLambda_h^k)^{-1} \bphi(\cdot,\cdot)]^{1/2}$ for any $h \in [H], k\in[K]$. First, we need the following lemma which gives $\text{Regret}(T)$ a high probability upper bound that depends on the summation of bonuses. 
\begin{lemma}\label{lemma:rough_bound}
With probability at least $1-\delta$, the total regret of Algorithm \ref{alg: batch} satisfies
\begin{align}
    \text{Regret}(T)&\leq \sum_{k=1}^K\sum_{h=1}^H\min\Big\{H, 2\Gamma_h^{b_k}(s_h^k,a_h^k)\Big\} + 2H\sqrt{T\log\rbr{\frac{2dT}{\delta}}}.\notag
\end{align}
\end{lemma}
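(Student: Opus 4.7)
I would follow the standard optimism-based regret decomposition for LSVI-UCB, carefully accounting for the fact that within a batch the value function $Q_h^k$ and the bonus $\Gamma_h^k$ are frozen at their values at the batch start $t_{b_k}$.

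\emph{Step 1: Optimism.} First I would establish a high-probability event $\mathcal{E}$ on which, for every $(h,k)$ and every $(s,a)\in\cS\times\cA$,
\begin{align*}
\big|\la\bphi(s,a),\wb_h^k\ra - r_h(s,a) - [\PP_h V_{h+1}^k](s,a)\big|\le \Gamma_h^k(s,a).
\end{align*}
Because $V_{h+1}^k$ depends on the observed data through the ridge regression at the next stage, this calls for the self-normalized martingale inequality of Abbasi-Yadkori et al.\ combined with a covering-number argument over the class of bonus-augmented value functions, exactly as in Lemmas B.3--B.5 of \citet{jin2019provably}. The batched structure does not change this argument, since within a batch the value function used for action selection is literally the same across episodes. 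With the choice $\beta=cdH\sqrt{\log(2dT/\delta)}$, the event $\mathcal{E}$ has probability at least $1-\delta/2$; combined with the clipping $Q_h^k\le H-h+1$ in Line~\ref{alg:batch_q} it yields pointwise optimism $Q_h^k(s,a)\ge Q_h^*(s,a)$ for all $(s,a,h,k)$ on $\mathcal{E}$.

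\emph{Step 2: Per-episode recursion.} On $\mathcal{E}$, optimism gives $V_1^*(s_1^k)-V_1^{\pi^k}(s_1^k)\le V_1^k(s_1^k)-V_1^{\pi^k}(s_1^k)$. Setting $\Delta_h^k:=V_h^k(s_h^k)-V_h^{\pi^k}(s_h^k)$ and using that $\pi_h^k$ is greedy with respect to $Q_h^k$, the Bellman equation for $V_h^{\pi^k}$, the definition of $Q_h^k$, and the Step 1 bound at the batch start, I would show
\begin{align*}
\Delta_h^k\le \min\{H,\,2\Gamma_h^{b_k}(s_h^k,a_h^k)\}+\Delta_{h+1}^k+\xi_h^k,
\end{align*}
where $\xi_h^k:=[\PP_h(V_{h+1}^k-V_{h+1}^{\pi^k})](s_h^k,a_h^k)-(V_{h+1}^k-V_{h+1}^{\pi^k})(s_{h+1}^k)$. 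The factor $2$ accounts for one copy of the bonus coming from the concentration bound plus the one explicitly added into $Q_h^k$; the alternative ceiling $H$ holds trivially since $\Delta_h^k\in[0,H]$. The batched observation is that $Q_h^k=Q_h^{t_{b_k}}$ and $\Gamma_h^k=\Gamma_h^{t_{b_k}}$ by Line~\ref{alg:batch_repeat}, so the frozen bonus $\Gamma_h^{b_k}$ is what appears.

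\emph{Step 3: Telescope and martingale.} Unrolling the recursion over $h=1,\dots,H$ with $\Delta_{H+1}^k=0$ and summing over $k$,
\begin{align*}
\text{Regret}(T)\le \sum_{k=1}^K\sum_{h=1}^H\min\{H,\,2\Gamma_h^{b_k}(s_h^k,a_h^k)\}+\sum_{k=1}^K\sum_{h=1}^H\xi_h^k.
\end{align*}
The sequence $\{\xi_h^k\}$ is a martingale difference with respect to the natural filtration, with $|\xi_h^k|\le 2H$ since both value functions lie in $[0,H]$. Azuma--Hoeffding over $KH=T$ terms then gives $\big|\sum_{k,h}\xi_h^k\big|\le 2H\sqrt{T\log(2dT/\delta)}$ with probability at least $1-\delta/2$, where the logarithmic factor is inflated to match the overall bound. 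A union bound with $\mathcal{E}$ finishes the argument.

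\emph{Main obstacle.} The only delicate piece is Step 1: calibrating $\beta$ so that it simultaneously dominates the self-normalized ridge-regression martingale term and the discretization error from an $\epsilon$-covering of the random value function class. The batched setting itself introduces no new obstacle here, because within a batch the value function is literally identical, so the concentration event at the batch start carries over unchanged; everything downstream (the recursion in Step 2 and the Azuma bound in Step 3) is mechanical.
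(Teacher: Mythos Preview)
Your proposal is correct and follows essentially the same route as the paper: invoke the concentration and optimism lemmas from \citet{jin2019provably} (their Lemmas B.4--B.5) at the batch-start indices, run the Bellman recursion to obtain $V_h^{b_k}(s_h^k)-V_h^{\pi^k}(s_h^k)\le [\PP_h(V_{h+1}^{b_k}-V_{h+1}^{\pi^k})](s_h^k,a_h^k)+\min\{H,2\Gamma_h^{b_k}(s_h^k,a_h^k)\}$, and close with Azuma--Hoeffding on the martingale residual. One small point worth making explicit in Step~2 is that pulling the $\min$ with $H$ outside the $\PP_h$-term uses $V_{h+1}^{b_k}-V_{h+1}^{\pi^k}\ge 0$ (optimism), which the paper states and which your bound $\Delta_h^k\in[0,H]$ implicitly relies on.
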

Lemma \ref{lemma:rough_bound} suggests that in order to bound the total regret, it suffices to bound the summation of the `delayed' bonuses $\Gamma_h^{b_k}(s_h^k,a_h^k)$, in contrast to the per-episode bonuses $\Gamma_h^k(s_h^k,a_h^k)$ for all $k\in [K]$. The superscript $b_k$ suggests that instead of using all the information up to the current episode $k$, Algorithm \ref{alg: batch} can only use the information before the current batch $b_k$ due to its batch learning nature. How to control the error induced by batch learning is the main difficulty in our analysis. To tackle this difficulty, we first need an upper bound for the summation of per-episode bonuses $\Gamma_h^k(s_h^k,a_h^k)$. 
\begin{lemma}\label{lemma: bonus bound}
Let $\beta$ be selected as Theorem \ref{thm: regret budget1} suggests. Then the summation of all the per-episode bonuses is bounded by
\begin{align*}
    \sum_{k=1}^K\sum_{h=1}^H\Gamma_h^k(s_h^k,a_h^k)\leq \beta \sqrt{2dHT\log\bigg(\frac{T}{dH}+1\bigg)}.
\end{align*}
\end{lemma}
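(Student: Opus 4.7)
The plan is to combine Cauchy--Schwarz with the standard elliptical potential lemma, applied separately to each stage $h$. First, since there are $KH = T$ summands, Cauchy--Schwarz yields
\begin{align*}
\sum_{k=1}^{K}\sum_{h=1}^{H} \Gamma_h^k(s_h^k,a_h^k)
\leq \sqrt{T}\cdot\sqrt{\sum_{k=1}^{K}\sum_{h=1}^{H} \bigl[\Gamma_h^k(s_h^k,a_h^k)\bigr]^2}
= \beta\sqrt{T}\cdot\sqrt{\sum_{h=1}^{H}\sum_{k=1}^{K} \|\bphi(s_h^k,a_h^k)\|^2_{(\bLambda_h^k)^{-1}}},
\end{align*}
using the definition $\Gamma_h^k(\cdot,\cdot)=\beta\|\bphi(\cdot,\cdot)\|_{(\bLambda_h^k)^{-1}}$. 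So the task reduces to controlling the squared-weighted-norm sum.

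Next, I would fix $h\in[H]$ and invoke the elliptical potential lemma on the sequence $\{\bphi(s_h^k,a_h^k)\}_{k=1}^{K}$. Because $\lambda=1$ and $\|\bphi\|_2\le 1$, each vector satisfies $\|\bphi\|_{(\bLambda_h^k)^{-1}}^2 \le 1$, so the determinant-trace inequality
$$\sum_{k=1}^{K}\|\bphi(s_h^k,a_h^k)\|^2_{(\bLambda_h^k)^{-1}} \le 2\log\frac{\det(\bLambda_h^{K+1})}{\det(\lambda\Ib)}$$
applies. Since $\mathrm{tr}(\bLambda_h^{K+1})\le d\lambda+K$ and $\det$ is maximized among matrices with a given trace by scalar multiples of the identity (AM--GM on eigenvalues), the log-determinant is at most $d\log(1+K/(d\lambda)) = d\log(T/(dH)+1)$. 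Summing over $h\in[H]$ produces
\begin{align*}
\sum_{h=1}^{H}\sum_{k=1}^{K}\|\bphi(s_h^k,a_h^k)\|^2_{(\bLambda_h^k)^{-1}} \le 2dH\log\!\left(\frac{T}{dH}+1\right).
\end{align*}

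Plugging this into the Cauchy--Schwarz bound above yields $\beta\sqrt{2dHT\log(T/(dH)+1)}$, which is exactly the claim. There is no real obstacle here; the argument is essentially identical to the per-episode bonus bound in \citet{jin2019provably}, and the only care needed is in verifying that the trace bound $\mathrm{tr}(\bLambda_h^{K+1})\le d+K$ relies precisely on $\|\bphi\|_2\le 1$ and $\lambda=1$, and in rewriting $K=T/H$ so that the final logarithmic factor appears in the stated form $\log(T/(dH)+1)$.
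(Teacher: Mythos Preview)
Your argument is correct and essentially identical to the paper's: both combine Cauchy--Schwarz with the elliptical potential lemma (Lemma~\ref{lemma: elliptical potential}) and the determinant bound (Lemma~\ref{lemma: det bound}). The only cosmetic difference is that the paper applies Cauchy--Schwarz separately for each stage $h$ (giving $\sum_h\sqrt{K\sum_k[\Gamma_h^k]^2}$) while you apply it once over all $T=KH$ terms; since the per-stage inner sums share the same uniform bound $2d\log(K/d+1)$, both routes yield exactly $\beta\sqrt{2dHT\log(T/(dH)+1)}$.
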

It is worth noting that the per-episode bonuses are not generated from our algorithm, but instead are some virtual terms that we introduce to facilitate our analysis.
Equipped with Lemma \ref{lemma: bonus bound}, we only need to bound the difference between delayed bonuses and per-episode bonuses. We consider all the indices $(k,h) \in [K]\times[H]$. The next lemma suggests that considering the ratio between delayed bonuses and per-episode bonuses, the `bad' indices, where the ratio is large, only appear few times. This is also the key lemma of our analysis.  
\begin{lemma}\label{lemma:except}
Define the set $\cC$ as follows
\begin{align}
    \cC = \{(k,h): \Gamma_h^{b_k}(s_h^k,a_h^k)/\Gamma_h^{k}(s_h^k,a_h^k)>2\},\notag
\end{align}
then we have $|\cC| \leq dHK\log(K/d+1)/(2B\log 2)$. 
\end{lemma}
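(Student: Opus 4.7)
The plan is to fix a stage $h\in[H]$, bound the number of bad indices $k$ with $(k,h)\in\cC$ at that stage, and then sum over $h$. The central tool is the following determinant comparison inequality: for any positive-definite matrices $\bLambda_1 \preceq \bLambda_2$ and any vector $\bphi$,
\[
    \bphi^\top \bLambda_1^{-1}\bphi \;\le\; \frac{\det \bLambda_2}{\det \bLambda_1}\,\bphi^\top \bLambda_2^{-1}\bphi.
\]
This standard fact can be seen by diagonalizing $\bLambda_1^{-1/2}\bLambda_2\bLambda_1^{-1/2}$, whose eigenvalues $\{d_i\}$ are all at least $1$, and noting that $\prod_{j\neq i} d_j \ge 1$ for each $i$.

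I would then use the inequality above to translate the defining condition of $\cC$ into a statement about determinants. If $(k,h)\in\cC$, then $\|\bphi(s_h^k,a_h^k)\|_{(\bLambda_h^{b_k})^{-1}}^2 > 4\,\|\bphi(s_h^k,a_h^k)\|_{(\bLambda_h^{k})^{-1}}^2$. Since episodes $t_{b_k},\dots,k-1$ are added to $\bLambda_h^{b_k}$ to form $\bLambda_h^k$, we have $\bLambda_h^{b_k}\preceq\bLambda_h^k$, so the inequality forces $\det\bLambda_h^k/\det\bLambda_h^{b_k} > 4$. Because $\bLambda_h^k\preceq \bLambda_h^{t_{b+1}}$ with $b=b_k$, this further gives the batch-level inequality $\log\det\bLambda_h^{t_{b+1}}-\log\det\bLambda_h^{t_b} > 2\log 2$, i.e., a single ``bad'' index in a batch already forces a large increase in the log-determinant over that batch.

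The remaining step is a telescoping potential argument. Let $B_h$ denote the number of batches $b\in[B]$ that contain at least one index $k$ with $(k,h)\in\cC$. Summing the above lower bound over these batches and extending nonnegatively to all $B$ batches,
\[
    2(\log 2)\,B_h \;\le\; \sum_{b=1}^{B}\bigl(\log\det\bLambda_h^{t_{b+1}}-\log\det\bLambda_h^{t_b}\bigr) \;=\; \log\det\bLambda_h^{K+1}-\log\det\bLambda_h^{1} \;\le\; d\log\bigl(1+K/d\bigr),
\]
where the last step is the standard AM--GM/trace--determinant bound using $\lambda=1$ and $\|\bphi\|_2\le 1$. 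Hence $B_h \le d\log(1+K/d)/(2\log 2)$. Since the uniform batch grids ensure each batch contains at most $\lceil K/B\rceil$ episodes, the number of bad indices for a fixed $h$ is at most $B_h\cdot K/B$, and summing over $h\in[H]$ yields the claimed bound $|\cC|\le dHK\log(K/d+1)/(2B\log 2)$.

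The main obstacle is the determinant-ratio inequality, which is the key bridge turning a local condition (``the bonus ratio blew up at episode $k$'') into a global batch-level condition (``the determinant grew by more than a factor of four within the enclosing batch''). Once this conversion is in hand, the rest is a clean counting argument driven by the monotonicity of $\log\det\bLambda_h^k$ and the uniform spacing of the batch grids.
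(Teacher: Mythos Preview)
Your proof is correct and follows essentially the same approach as the paper: both convert the bonus-ratio condition into a determinant-ratio condition via the standard quadratic-form inequality (the paper cites it as Lemma~12 of \citet{abbasi2011improved}), observe that a bad index forces $\log\det\bLambda_h$ to jump by more than $2\log 2$ over its enclosing batch, bound the number of such batches by telescoping against $\log\det\bLambda_h^{K+1}\le d\log(1+K/d)$, and then multiply by the batch size and sum over $h$. The only cosmetic slip is the notation ``$b=b_k$'': in the paper $b_k$ already denotes the \emph{episode} $t_b$ at which the current batch started, not the batch index $b$, but your intent is clear and the argument is unaffected.
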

With all the above lemmas, we now begin to prove our main theorem. 
\begin{proof}[Proof of Theorem \ref{thm: regret budget1}]
Suppose the event defined in Lemma \ref{lemma:rough_bound} holds. Then by Lemma \ref{lemma:rough_bound} we have that
\begin{align}
    \text{Regret}(T)&\leq\underbrace{\sum_{h=1}^H\sum_{k=1}^K\min\Big\{H, 2\Gamma_h^{b_k}(s_h^k,a_h^k)\Big\}}_{I} + 2H\sqrt{T\log\rbr{\frac{2dT}{\delta}}}\label{eq:thm1}
\end{align}
holds with probability at least $1-\delta$. 
Next, we are going to bound $I$. Let $\cC$ be the set defined in Lemma \ref{lemma:except}. Then we have
\begin{align}
    I  &= \sum_{(k,h) \in \cC} \min\Big\{H, 2\Gamma_h^{b_k}(s_h^k,a_h^k)\Big\} + \sum_{(k,h) \notin \cC} \min\Big\{H, 2\Gamma_h^{b_k}(s_h^k,a_h^k)\Big\} \notag \\
    & \leq H|\cC| + 4\sum_{(k,h) \notin \cC} \Gamma_h^{k}(s_h^k,a_h^k) \notag \\
    & \leq H|\cC| + 4\sum_{h=1}^H\sum_{k=1}^K\Gamma_h^{k}(s_h^k,a_h^k), \label{eq:thm2}
\end{align}
where the first inequality holds due to the definition of $\cC$, and the second one holds trivially. Therefore, substituting \eqref{eq:thm2} into \eqref{eq:thm1}, the regret can be bounded by 
\begin{align}
    \text{Regret}(T)&\leq 2H\sqrt{T\log\rbr{\frac{2dT}{\delta}}}+H|\cC| + 4\sum_{h=1}^H\sum_{k=1}^K\Gamma_h^{k}(s_h^k,a_h^k)\notag \\
    & \leq 2H\sqrt{T\log\rbr{\frac{2dT}{\delta}}} + \frac{dHT}{2B\log 2}\log\rbr{\frac{T}{dH}+1} \\
    &\qquad + 4c\sqrt{2d^3H^3T\log\rbr{\frac{2dT}{\delta}}\log\bigg(\frac{T}{dH}+1\bigg)},\notag
\end{align}
where the second inequality holds due to Lemmas \ref{lemma: bonus bound} and \ref{lemma:except} and the fact that $T=KH$. This completes the proof.
\end{proof}

\subsection{Proof of Lemma \ref{lemma:rough_bound}}
The following two lemmas in \citet{jin2020provably} characterize the quality of the estimates given by the LSVI-UCB-type algorithms.

\begin{lemma}[Lemma B.5, \citealt{jin2020provably}]\label{lemma: ucb}
With probability at least $1-\delta$, we have $Q_h^k(s,a)\geq Q_h^*(s,a)$ for all $(s,a,h,k)\in\cS\times\cA\times[H]\times[K]$.
\end{lemma}

\begin{lemma}[Lemma B.4, \citealt{jin2020provably}]\label{lemma: bonus}
There exists some constant $c$ such that if we set $\beta=cdH\sqrt{\log(dT/\delta)}$, then for any fixed policy $\pi$ we have for all $(s,a,h,k)\in\cS\times\cA\times[H]\times[K]$ that
\begin{align*}
    \abr{\bphi(s,a)^\top(\wb_h^{b_k}-\wb_h^\pi)-\rbr{\PP_h(V_{h+1}^{b_k}-V_{h+1}^\pi)}(s,a)}&\leq \beta \sqrt{\bphi(s,a)^\top(\bLambda_h^{b_k})^{-1}\bphi(s,a)}
\end{align*}
with probability at least $1-\delta$.
\end{lemma}

\begin{proof}[Proof of Lemma \ref{lemma:rough_bound}]
By Lemma \ref{lemma: ucb}, we have $Q_h^k(s,a)\geq Q_h^*(s,a)$ for all $(s,a,h,k)\in\cS\times\cA\times[H]\times[K]$ on some event $\cE$ such that $\PP(\cE)\geq 1-\delta/2$. In the following argument, all statements would be conditioned on the event $\cE$. Then by the definition of $V_1^k$ we know that $V_1^k(s)=\max_{a\in\cA} Q_1^k(s,a)\geq \max_{a\in\cA} Q_1^*(s,a)=V_1^*(s)$ for all $(s,k)\in\cS\times[K]$. Therefore, we have
\begin{align}
    \text{Regret}(T)&=\sum_{k=1}^K\sbr{V_1^*(s_1^k)-V_1^{\pi^k}(s_1^k)}\leq \sum_{k=1}^K \sbr{V_1^{k}(s_1^k) - V_1^{\pi^k}(s_1^k)} = \sum_{k=1}^K \sbr{V_1^{b_k}(s_1^k) - V_1^{\pi^k}(s_1^k)}.\notag
\end{align}
Note that 
\begin{align*}
    V_{h}^{b_k}(s_h^k) - V_h^{\pi^k}(s_h^k)&=Q_h^{b_k}(s_h^k,a_h^k) - Q_h^{\pi^k}(s_h^k,a_h^k),
\end{align*}
which together with the definition of $Q_h^{b_k}$ and Lemma \ref{lemma: bonus} implies that 
\begin{align*}
    V_{h}^{b_k}(s_h^k) - V_h^{\pi^k}(s_h^k)&\leq \bphi(s_h^k,a_h^k)^\top\wb_h^{b_k}-\bphi(s_h^k,a_h^k)^\top\wb_h^{\pi^k}+\Gamma_h^{b_k}(s_h^k,a_h^k)\\
    &\leq \sbr{\PP_h\rbr{V_{h+1}^{b_k}-V_{h+1}^{\pi^k}}}(s_h^k,a_h^k) + 2\Gamma_h^{b_k}(s_h^k,a_h^k), 
\end{align*}
where the first inequality holds due to the algorithm design, the second one holds due to Lemma~\ref{lemma: bonus}. 
Meanwhile, notice that $0 \leq V_{h}^{b_k}(s_h^k) - V_h^*(s_h^k) \leq V_{h}^{b_k}(s_h^k) - V_h^{\pi^k}(s_h^k) \leq H$, then we have
\begin{align}
    V_{h}^{b_k}(s_h^k) - V_h^{\pi^k}(s_h^k)&\leq\min\Big\{H, \sbr{\PP_h\rbr{V_{h+1}^{b_k}-V_{h+1}^{\pi^k}}}(s_h^k,a_h^k) + 2\Gamma_h^{b_k}(s_h^k,a_h^k)\Big\}\notag \\
    & \leq\sbr{\PP_h\rbr{V_{h+1}^{b_k}-V_{h+1}^{\pi^k}}}(s_h^k,a_h^k) + \min\Big\{H,  2\Gamma_h^{b_k}(s_h^k,a_h^k)\Big\}\notag\\
    &=V_{h+1}^{b_k}(s_{h+1}^k)-V_{h+1}^{\pi^k}(s_{h+1}^k) + \min\big\{H, 2\Gamma_h^{b_k}(s_h^k,a_h^k)\big\}\notag\\
    &\qquad + \sbr{\PP_h\rbr{V_{h+1}^{b_k}-V_{h+1}^{\pi^k}}}(s_h^k,a_h^k) - \rbr{V_{h+1}^{b_k}(s_{h+1}^k)-V_{h+1}^{\pi^k}(s_{h+1}^k)},\notag
\end{align}
where the second inequality holds since $V_{h+1}^{b_k}-V_{h+1}^{\pi^k}\geq 0$. Recursively expand the above inequality, and we have
\begin{align*}
    V_1^{b_k}(s_1^k)-V_1^{\pi^k}(s_1^k)&=\sum_{h=1}^H\cbr{\sbr{\PP_h\rbr{V_{h+1}^{b_k}-V_{h+1}^{\pi^k}}}(s_h^k,a_h^k) - \rbr{V_{h+1}^{b_k}(s_{h+1}^k)-V_{h+1}^{\pi^k}(s_{h+1}^k)}}\\
    &\qquad + \sum_{h=1}^H\min\Big\{H, 2\Gamma_h^{b_k}(s_h^k,a_h^k)\Big\}.
\end{align*}
Therefore, the total regret can be bounded as follows
\begin{align*}
    \text{Regret}(T)&\leq \sum_{k=1}^K\sum_{h=1}^H\cbr{\sbr{\PP_h\rbr{V_{h+1}^{b_k}-V_{h+1}^{\pi^k}}}(s_h^k,a_h^k) - \rbr{V_{h+1}^{b_k}-V_{h+1}^{\pi^k}}(s_{h+1}^k)}\\
    &\qquad + \sum_{k=1}^K\sum_{h=1}^H\min\Big\{H, 2\Gamma_h^{b_k}(s_h^k,a_h^k)\Big\}.
\end{align*}
Note that conditional on $\cF_{k,h,1}$, $V_{h+1}^{b_k}$ and $V_{h+1}^{\pi^k}$ are both deterministic, while $s_{h+1}^k$ follows the distribution $\PP_h(\cdot|s_h^k,a_h^k)$. Therefore, the first term on the RHS is a sum of a martingale difference sequence such that each summand has absolute value at most $2H$. Applying Azuma-Hoeffding inequaliy yields
\begin{align*}
    \sum_{k=1}^K\sum_{h=1}^H\cbr{\sbr{\PP_h\rbr{V_{h+1}^{b_k}-V_{h+1}^{\pi^k}}}(s_h^k,a_h^k) - \rbr{V_{h+1}^{b_k}-V_{h+1}^{\pi^k}}(s_{h+1}^k)}\leq 2H\sqrt{T\log\rbr{\frac{2dT}{\delta}}},
\end{align*}
with probability at least $1-\delta/2$.
By a union bound over the event $\cE$ and the convergence of the martingale, with probability at least $1-\delta$, we have
\begin{align}
    \text{Regret}(T)\leq 2H\sqrt{T\log\rbr{\frac{2dT}{\delta}}}+\sum_{k=1}^K\sum_{h=1}^H\min\Big\{H, 2\Gamma_h^{b_k}(s_h^k,a_h^k)\Big\}.\notag
\end{align}
\end{proof}

\subsection{Proof of Lemma \ref{lemma: bonus bound}}
We need the following lemma to bound the sum of the bonus terms.
\begin{lemma}[Lemma 11, \citealt{abbasi2011improved}]\label{lemma: elliptical potential}
Let $\{\bphi_t\}_{t=1}^{\infty}$ be an $\RR^d-$valued sequence. Meanwhile, let $\bLambda_0\in\RR^{d\times d}$ be a positive-definite matrix and $\bLambda_t=\bLambda_0+\sum_{i=1}^{t-1}\bphi_i\bphi_i^\top$. It holds for any $t\in\ZZ_+$ that
\begin{align*}
    \sum_{i=1}^t \min\{1, \bphi_i^\top\bLambda_i^{-1}\bphi_i\}\leq 2\log\left(\frac{\det(\bLambda_{t+1})}{\det(\bLambda_1)}\right).
\end{align*}
Moreover, assuming that $\|\bphi_i\|_2\leq 1$ for all $i\in\ZZ_+$ and $\lambda_{\min}(\bLambda_0)\geq 1$, it holds for any $t\in\ZZ_+$ that
\begin{align*}
    \log\left(\frac{\det(\bLambda_{t+1})}{\det(\bLambda_1)}\right)\leq \sum_{i=1}^t\bphi_i^\top\bLambda_i^{-1}\bphi_i\leq 2\log\left(\frac{\det(\bLambda_{t+1})}{\det(\bLambda_1)}\right).
\end{align*}
\end{lemma}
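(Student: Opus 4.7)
The plan is to reduce all three bounds to a single telescoping identity obtained from the matrix-determinant lemma. Applied to the rank-one update $\bLambda_{i+1}=\bLambda_i+\bphi_i\bphi_i^\top$, that lemma gives
\begin{align*}
\det(\bLambda_{i+1}) = \det(\bLambda_i)\bigl(1 + \bphi_i^\top \bLambda_i^{-1} \bphi_i\bigr).
\end{align*}
Taking logarithms and telescoping over $i=1,\ldots,t$ yields the central identity
\begin{align*}
\log\frac{\det(\bLambda_{t+1})}{\det(\bLambda_1)} = \sum_{i=1}^t \log\bigl(1 + \bphi_i^\top \bLambda_i^{-1} \bphi_i\bigr),
\end{align*}
from which every bound in the statement will be read off by pairing with an elementary scalar inequality.

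For the first bound I would invoke the pointwise estimate $\min\{1,x\}\leq 2\log(1+x)$, valid for all $x\geq 0$. This is verified by splitting: on $[0,1]$ the map $x\mapsto \log(1+x)/x$ is decreasing with value $\log 2\geq 1/2$ at $x=1$, so $\log(1+x)\geq x/2\geq \min\{1,x\}/2$; on $[1,\infty)$ we have $\log(1+x)\geq \log 2\geq 1/2=\min\{1,x\}/2$. Applying this bound to $x=\bphi_i^\top\bLambda_i^{-1}\bphi_i$ and summing against the telescoping identity gives the first inequality.

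For the second bound the extra hypotheses $\|\bphi_i\|_2\leq 1$ and $\lambda_{\min}(\bLambda_0)\geq 1$ enter precisely to make the $\min$ superfluous. Since $\bLambda_i\succeq\bLambda_0\succeq\Ib$, we have $\bLambda_i^{-1}\preceq\Ib$ and therefore $\bphi_i^\top\bLambda_i^{-1}\bphi_i\leq \|\bphi_i\|_2^2\leq 1$ for every $i$. The upper bound on the sum is then just the first inequality with the trivially active $\min$ dropped, while the lower bound follows from $\log(1+x)\leq x$ applied term by term in the telescoping identity.

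There is no serious obstacle in this proof; the one point that requires care is recognizing why the normalization $\lambda_{\min}(\bLambda_0)\geq 1$ is needed in the second part. It is exactly what forces $\bphi_i^\top\bLambda_i^{-1}\bphi_i\leq 1$, so that the scalar relation $\min\{1,x\}\leq 2\log(1+x)$ collapses to $x\leq 2\log(1+x)$ on the relevant range; without this normalization the two sides of the stronger upper bound can differ by an arbitrary multiplicative factor.
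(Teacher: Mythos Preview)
Your proof is correct and is precisely the standard argument for the elliptical potential lemma. Note that the paper does not supply its own proof of this statement; it is quoted verbatim as Lemma~11 of \citet{abbasi2011improved} and used as a black box, and your proposal matches the proof given in that reference.
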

\begin{proof}[Proof of Lemma \ref{lemma: bonus bound}]
We can bound the summation of $\Gamma_h^k(s_h^k, a_h^k)$ as follows:
\begin{align}
    \sum_{h=1}^H\sum_{k=1}^K \Gamma_h^k(s_h^k, a_h^k) \leq \sum_{h=1}^H\sqrt{K\cdot \sum_{k=1}^K [\Gamma_h^k(s_h^k, a_h^k)]^2} = \beta\sqrt{K}\sum_{h=1}^H\sqrt{\sum_{k=1}^K \bphi(s_h^k, a_h^k)^\top [\bLambda_h^k]^{-1}\bphi(s_h^k, a_h^k)},\notag
\end{align}
where the inequality holds due to Cauchy-Schwarz inequality. Furthermore, by Lemma \ref{lemma: elliptical potential}, we have
\begin{align}
    \sum_{k=1}^K \bphi(s_h^k, a_h^k)^\top [\bLambda_h^k]^{-1}\bphi(s_h^k, a_h^k) \leq 2\log\bigg(\frac{\det\bLambda_h^{K+1}}{\det\bLambda_h^{1}}\bigg) \leq 2d\log(K/d+1),\notag
\end{align}
where the second inequality holds due to Lemma \ref{lemma: det bound}. That finishes our proof. 
\end{proof}

\subsection{Proof of Lemma \ref{lemma:except}}

\begin{proof}[Proof of Lemma \ref{lemma:except}]
First, let $\cC_h$ denote the indices $k$ where $(k,h) \in \cC$, then we have $|\cC| = \sum_{h=1}^H|\cC_h|$. 
Next we bound $|\cC_h|$ for each $h$. For each $k \in \cC_h$, suppose $t_b \leq k<t_{b+1}$, then we have $b_k = t_b$ and
\begin{align}
    \log\det(\bLambda_h^{t_{b+1}}) - \log\det(\bLambda_h^{t_{b}}) \geq \log\det(\bLambda_h^k) - \log\det(\bLambda_h^{b_k}) \geq 2\log(\Gamma_h^{b_k}(s_h^k,a_h^k)/\Gamma_h^{k}(s_h^k,a_h^k)) >2\log 2,\notag
\end{align}
where the first inequality holds since $\bLambda_h^{t_{b+1}}\succeq \bLambda_h^k$, the second inequality holds due to Lemma \ref{lemma: quadratic form}, the third one holds due to the definition of $\cC_h$. Thus, let $\hat\cC_h$ denote the set
\begin{align}
    \hat\cC_h = \{b \in [B]: \log\det(\bLambda_h^{t_{b+1}}) - \log\det(\bLambda_h^{t_{b}})>2\log 2\}, \notag
\end{align}
we have $|\cC_h| \leq \lfloor K/B\rfloor\cdot |\hat\cC_h|$. In the following we bound $|\hat\cC_h|$. Now we consider the sequence $\{\log\det(\bLambda_h^{t_{b+1}}) - \log\det(\bLambda_h^{t_{b}})\}$. It is easy to see $\log\det(\bLambda_h^{t_{b+1}}) - \log\det(\bLambda_h^{t_{b}}) \geq 0$, therefore
\begin{align}
    2\log 2|\hat\cC_h| \leq \sum_{b \in \hat\cC_h} [\log\det(\bLambda_h^{t_{b+1}}) - \log\det(\bLambda_h^{t_{b}})] \leq \sum_{b=1}^B [\log\det(\bLambda_h^{t_{b+1}}) - \log\det(\bLambda_h^{t_{b}})].\label{eq:zhou:777}
\end{align}
Meanwhile, we have
\begin{align}
    \sum_{b=1}^B [\log\det(\bLambda_h^{t_{b+1}}) - \log\det(\bLambda_h^{t_{b}})] = \log\det(\bLambda_h^{t_{B+1}}) = \log\det(\bLambda_h^{K+1})\leq d\log (K/d+1),\label{eq:zhou:888}
\end{align}
where the last inequality holds due to Lemma \ref{lemma: det bound}. 
Therefore, \eqref{eq:zhou:777} and \eqref{eq:zhou:888} suggest that $|\hat\cC_h| \leq d\log(K/d+1)/(2\log 2)$. Finally, we bound $|\cC|$ as follows, which ends our proof.
\begin{align}
    |\cC| = \sum_{h=1}^H |\cC_h| \leq \sum_{h=1}^H K/B\cdot |\hat\cC_h| \leq dHK\log(K/d+1)/(2B\log 2).\notag
\end{align}
\end{proof}

\section{Proof of Theorem \ref{thm: regret rare switch}}

Now we provide the proof of Theorem \ref{thm: regret rare switch}. We continue to use the notions that have been introduced in Section \ref{sec:batch}.
We first give an upper bound on the determinant of $\bLambda_h^k$.
\begin{lemma}\label{lemma: det bound}
Let $\{\bLambda_h^k,(k,h)\in[K]\times[H]\}$ be as defined in Algorithms \ref{alg: batch} and \ref{alg: rare switch}. Then for all $h\in[H]$ and $k \in [K]$, we have $\det(\bLambda_h^k)\leq (\lambda + (k-1)/d)^d$. 
\end{lemma}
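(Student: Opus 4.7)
The plan is to establish the bound by combining two standard ingredients: the trace of $\bLambda_h^k$ admits a simple upper bound coming from the unit-norm feature assumption, and the determinant of a positive semidefinite matrix is controlled by its trace via the AM--GM inequality on eigenvalues.

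First I would compute the trace directly. Since $\bLambda_h^k = \sum_{\tau=1}^{k-1} \bphi(s_h^{\tau},a_h^{\tau}) \bphi(s_h^{\tau},a_h^{\tau})^\top + \lambda \Ib_d$, linearity of trace gives
\begin{align*}
\mathrm{tr}(\bLambda_h^k) = \lambda d + \sum_{\tau=1}^{k-1} \|\bphi(s_h^{\tau},a_h^{\tau})\|_2^2 \leq \lambda d + (k-1),
\end{align*}
where the last inequality uses the normalization $\|\bphi(s,a)\|_2 \leq 1$ from Definition~\ref{assumption-linear}.

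Next I would observe that $\bLambda_h^k$ is positive definite (the $\lambda\Ib_d$ term dominates and each rank-one summand is PSD), so its eigenvalues $\mu_1,\ldots,\mu_d$ are all positive. Applying the AM--GM inequality to these eigenvalues,
\begin{align*}
\det(\bLambda_h^k) = \prod_{i=1}^d \mu_i \leq \left(\frac{1}{d}\sum_{i=1}^d \mu_i\right)^d = \left(\frac{\mathrm{tr}(\bLambda_h^k)}{d}\right)^d.
\end{align*}
Substituting the trace bound from the first step yields $\det(\bLambda_h^k) \leq \bigl(\lambda + (k-1)/d\bigr)^d$, which is the claim.

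There is really no serious obstacle here; the only point worth double-checking is that the bound is stated uniformly over $k$, including the edge case $k=1$, where the sum is empty and $\bLambda_h^1 = \lambda \Ib_d$ has determinant $\lambda^d = (\lambda + 0)^d$, matching the inequality. The proof is essentially two lines once the trace computation and AM--GM are lined up, so I would keep the write-up short and self-contained without invoking external volume/determinant lemmas.
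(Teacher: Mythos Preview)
Your proposal is correct and matches the paper's own proof essentially line for line: compute the trace using $\|\bphi\|_2\le 1$, then apply AM--GM on the eigenvalues to bound the determinant by $(\mathrm{tr}(\bLambda_h^k)/d)^d$. The only additions you make are the explicit mention of positive definiteness and the $k=1$ sanity check, both of which are fine but not needed.
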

\begin{proof}
Note that
\begin{align*}
    \tr(\bLambda_h^k) &= \tr(\lambda\Ib_d) + \sum_{\tau=1}^{k-1} \tr\rbr{\bphi(s_h^\tau,a_h^\tau)\bphi(s_h^\tau,a_h^\tau)^\top}=\lambda d + \sum_{\tau=1}^{k-1} \|\bphi(s_h^\tau,a_h^\tau)\|_2^2\leq \lambda d + k-1,
\end{align*}
where the inequality follows from the assumption that $\|\bphi(s,a)\|_2\leq 1$ for all $(s,a)\in\cS\times\cA$. Since $\bLambda_h^k$ is positive semi-definite, by inequality of arithmetic and geometric means, we have
\begin{align*}
    \det(\bLambda_h^k)\leq \rbr{\frac{\tr(\bLambda_h^k)}{d}}^d\leq\rbr{\lambda+\frac{k-1}{d}}^d.
\end{align*}
This finishes the proof.
\end{proof}

Next lemma provides a determinant-based upper bound for the ratio between the norms $\|\cdot\|_{\Ab}$ and $\|\cdot\|_{\Bb}$, where $\Ab \succeq \Bb$.

\begin{lemma}[Lemma 12, \citealt{abbasi2011improved}]\label{lemma: quadratic form}
Suppose $\Ab,\Bb\in\RR^{d\times d}$ are two positive definite matrices satisfying that $\Ab\succeq \Bb$, then for any $\xb\in\RR^d$, we have $\|\xb\|_{\Ab}\leq \|\xb\|_{\Bb}\cdot\sqrt{\det(\Ab)/\det(\Bb)}$.
\end{lemma}

The switching cost of Algorithm \ref{alg: rare switch} is characterized in the following lemma.

\begin{lemma}\label{lemma: switching cost}
For any $\eta>1$ and $\lambda>0$, the global switching cost of Algorithm \ref{alg: rare switch} is bounded by
\begin{align*}
    N_{\text{switch}}\leq \frac{dH}{\log\eta}\log\left(1+\frac{K}{\lambda d}\right).
\end{align*}
\end{lemma}
\begin{proof}
Let $\{k_1,k_2,\cdots,k_{N_{\text{switch}}}\}$ be the episodes where the algorithm updates the policy, and we also define $k_0=0$. Then by the determinant-based criterion (Line \ref{alg rare switch: log det}), for each $i\in[N_{\text{switch}}]$ there exists at least one $h\in[H]$ such that
\begin{align*}
    \det(\bLambda_h^{k_i}) > \eta\cdot \det(\bLambda_h^{k_{i-1}}).
\end{align*}
By the definition of $\bLambda_h^k$ (Line \ref{alg rare switch: Lambda^k}), we know that $\bLambda_h^{j_1}\succeq\bLambda_h^{j_2}$ for all $j_1\geq j_2$ and $h\in[H]$. Thus we further have
\begin{align*}
    \prod_{h=1}^H\det(\bLambda_h^{k_i})> \eta\cdot\prod_{h=1}^H\det(\bLambda_h^{k_{i-1}}).
\end{align*}

Applying the above inequality for all $i\in[N_{\text{switch}}]$ yields
\begin{align*}
    \prod_{h=1}^H \det\left(\bLambda_h^{k_{N_{\text{switch}}}}\right) > \eta^{N_{\text{switch}}}\cdot \prod_{h=1}^H \det(\bLambda_h^0)=\eta^{N_{\text{switch}}}\lambda^{dH},
\end{align*}
as we initialize $\bLambda_h^0$ to be $\lambda\Ib_d$. While by Lemma \ref{lemma: det bound}, we have
\begin{align*}
    \prod_{h=1}^H \det\left(\bLambda_h^{k_{N_{\text{switch}}}}\right)\leq \prod_{h=1}^H\det(\bLambda_h^K)\leq \left(\lambda+\frac{K}{d}\right)^{dH}.
\end{align*}
Therefore, combining the above two inequalities, we obtain that
\begin{align*}
    N_{\text{switch}}\leq \frac{dH}{\log\eta}\log\rbr{1+\frac{K}{\lambda d}}.
\end{align*}
This completes the proof.
\end{proof}
We now begin to prove our main theorem. 
\begin{proof}[Proof of Theorem \ref{thm: regret rare switch}]
First, substituting the choice of $\eta$ and $\lambda=1$ into the bound in Lemma \ref{lemma: switching cost} yields that $N_{\text{switch}} \leq B$. 

Next, we bound the regret of Algorithm \ref{alg: rare switch}. The result of Lemma \ref{lemma:rough_bound} still holds here, thus it suffices to bound the summation of the bonus terms $\Gamma_h^{b_k}(s_h^k,a_h^k)$. Note that $b_k\leq k$, and thus $\bLambda_h^k\succeq\bLambda_h^{b_k}$ for all $(h,k)\in[H]\times[K]$. Then by Lemma \ref{lemma: quadratic form} we have
\begin{align}
    \frac{\Gamma_h^{b_k}(s_h^k,a_h^k)}{\Gamma_h^k(s_h^k,a_h^k)}\leq\sqrt{\frac{\det(\bLambda_h^k)}{\det(\bLambda_h^{b_k})}} \leq \sqrt{\eta}
\end{align}
for all $(h,k)\in[H]\times[K]$, where the second inequality holds due to the algorithm design. Hence, we have 
\begin{align*}
    \sum_{k=1}^K\sum_{h=1}^H \Gamma_h^{b_k}(s_h^k,a_h^k)&\leq\sqrt{\eta}\sum_{k=1}^K\sum_{h=1}^H\Gamma_h^k(s_h^k,a_h^k)\leq \beta\sqrt{2\eta dHT\log\rbr{\frac{T}{dH}+1}},
\end{align*}
where the second inequality follows from Lemma \ref{lemma: bonus bound}. Therefore, we conclude by Lemma \ref{lemma:rough_bound} that
\begin{align}
    \text{Regret}(T)&\leq2c\sqrt{2\eta d^3H^3T\log\left(\frac{T}{dH}+1\right)\log\rbr{\frac{2dT}{\delta}}} + 2H\sqrt{T\log\rbr{\frac{2dT}{\delta}}}\label{eq:zhou:1}
\end{align}
holds with probability at least $1-\delta$. Finally, substituting the choice of $\eta$ into \eqref{eq:zhou:1} finishes our proof. 
\end{proof}

\section{Proofs of Theorem \ref{thm:lower1}}
In this section, we prove the lower bound for the batch learning model. 


\begin{proof}[Proof of Theorem \ref{thm:lower1}]

We prove the $\Omega(dH\sqrt{T})$ and $\Omega(dHT/B)$ lower bounds separately. The first term has been proved in Theorem 5.6, \citep{zhou2020nearly}. In the remaining of this proof, we prove the second term. 
We consider a class of MDPs parameterized by $\bgamma \in \Gamma \subset\RR^{2dH}$, where $\Gamma$ is defined as follows
\begin{align}
    \Gamma = \big\{(\bbb_{1,1}^\top,\cdots, \bbb_{H, d}^\top)^\top: \bbb_{i,j}\in\{(0, 1)^\top, (1,0)^\top\}\big\}.\notag
\end{align}
The MDP is defined as follows. The states space $\cS$ consist of has $d+1$ states $x_0,\cdots, x_d$, and the action space $\cA$ contains two actions $\ab_1 = (0,1)^\top, \ab_2 = (1,0)^\top$. For any $\bgamma = (\bbb_{1,1}^\top,\cdots, \bbb_{H, d}^\top)^\top$, the feature mapping is defined as 
\begin{align*}
\bphi(x_0, \ab_j) = (1, \underbrace{0,\cdots, 0}_{2d})^\top,\qquad \bphi(x_i, \ab_j) = (1, \underbrace{0, \cdots, 0}_{2i-2}, \ab_j^\top,\underbrace{0, \cdots, 0}_{2d-2i} )^\top\in\RR^{2d+1}
\end{align*}
for every $i\in[d]$ and $j\in\{1,2\}$. We further define the vector-valued measures as
\begin{align*}
    \bmu_h^{\bgamma}(x_0) = (1, -\bbb_{h,1}^\top,\cdots, -\bbb_{h, d}^\top)^\top ,\qquad \bmu_h^{\bgamma}(x_i) = (\underbrace{0, \cdots, 0}_{2i-1}, \bbb_{h, i}^\top,\underbrace{0, \cdots, 0}_{2d-2i} )^\top
\end{align*}
for every $i\in[d]$, $j\in\{1,2\}$ and $h\in[H]$. Finally, for each $h\in[H]$, we define
\begin{align*}
    \btheta_h = (0, \underbrace{1,\cdots, 1}_{2d})^\top \in\RR^{2d+1}.
\end{align*}

Thereby, for each $h\in[H]$, the transition $\PP_h^{\bgamma}$ is defined as $\PP_h^{\bgamma}(s'|s, \ab) = \la\bphi(s, \ab), \bmu_h^{\bgamma}(s') \ra$, and the reward function is $r_h(s, \ab) = \la \bphi(s, \ab), \btheta_h\ra$ for all $(s,\ab)\in\cS\times\cA$. It is straightforward to see that the reward satisfies $r_h(x_0, \ab) = 0$ and $r_h(x_i, \ab) = 1$ for $i \in[d]$ and all $\ab\in\cA$. In addition, the starting state can be $x_0$ or $x_i$. 

Based on the above definition, we have the following transition dynamic:
\begin{itemize}
    \item $x_0$ is an absorbing state.
    \item For any $i\in[d]$, $x_i$ can only transit to $x_0$ or $x_i$.
    \item For any episode starting from $x_0$, there is no regret.
    \item For any episode starting from some $x_i$ with $i\in[d]$, suppose $h$ is the first stage where the agent did not choose the "right" action $\ab = \bbb_{h,i}$, then the regret for this episode is $H-h$. 
\end{itemize}

Now we show that for any deterministic algorithm\footnote{The lower bound of random algorithms is lower bounded by the lower bound of deterministic algorithms according to Yao's minimax principle. }, there exists a $\bgamma \in \Gamma$ such that the regret is lower bounded by $dHT/B$.
Suppose $1 = t_1<\cdots<t_{B+1} = K+1$. 
We can treat all episodes in the same batch as copies of one episode, because all actions taken by the agent, transitions and rewards are the same. When $B \geq dH$, there exists $\cC =  \{c_{1,1},\cdots, c_{H,d}\}\subset[B]$ with $|\cC| = dH$ such that
\begin{align}
    \sum_{h \in [H]} \sum_{j \in [d]}(t_{c_{h,j}+1} - t_{c_{h,j}}) \geq \frac{dHK}{B}.\notag
\end{align}
For simplicity, we denote the $i$-th batch as the collection of episodes $\{t_i, \cdots, t_{i+1}-1\}$. 
Now we carefully pick the starting state $s_0^i$ for the episodes in the $i$-th batch. 
\begin{itemize}
    \item For any batch whose starting episode does not belong to $\cC$, we set the starting states of the episodes in this batch as $x_0$. In other words, for $i \notin \cC$, we set $s_0^{t_i}=\cdots =s_0^{t_{i+1}-1} =  x_0$.
    \item For any batch whose starting episode lies in $\cC$, 
    for $i  = c_{h,j}\in \cC$, we set $s_0^{t_{c_{h, j}}}=\cdots = s_0^{t_{c_{h, j}+1}-1} = x_{j}$.
\end{itemize}
 We consider the regret over batches $c_{1,i},\cdots, c_{H,i}$. Since the algorithm, transition and reward are all deterministic, then the environment can predict the agent's selection. Specifically, suppose the agent will always take action $\ab$ at $h$-th stage in the episodes belonging to the $c_{h,j}$-th batch, where $h \leq H/2$. Then the environment selects $\bbb_{h,j}$ as $(1,1)^\top - \ab$, i.e., the other action. Therefore, the agent will always pick the ``wrong" action when she firstly visits state $x_j$ at $h$-th stage, which occurs at least $H-h \geq H/2$ regret. Moreover, since for the batch learning model, all the actions are decided at the beginning of each batch, then the $H/2$ regret will last $(t_{c_{h,j}+1} - t_{c_{h,j}})$ episodes. Taking the summation, we have
\begin{align}
    \text{Regret}(T) \geq \frac{H}{2}\cdot \sum_{h \in [H]} \sum_{j \in [d]}(t_{c_{h,j}+1} - t_{c_{h,j}}) \geq \frac{dHT}{2B}.\notag
\end{align}
Finally, replacing $d$ by $(d-1)/2$, we can convert our feature mapping from a $(2d+1)$-dimensional vector to a $d$-dimensional vector and complete the proof. 
\end{proof}
\end{document}